\def\eqref#1{equation~\ref{#1}}
\def\1{\bm{1}}
\DeclareMathAlphabet{\mathsfit}{\encodingdefault}{\sfdefault}{m}{sl}
\SetMathAlphabet{\mathsfit}{bold}{\encodingdefault}{\sfdefault}{bx}{n}
\newcommand{\E}{\mathbb{E}}
\theoremstyle{plain}
\newtheorem{proposition}{Proposition}
\newtheorem*{proposition*}{Proposition}
\theoremstyle{remark}
\title{Flow Matching in the Low-Noise Regime: Pathologies and a Contrastive Remedy}
\author{Weili Zeng, Yichao Yan \\
MoE Key Lab of Artificial Intelligence, AI Institute\\
Shanghai Jiao Tong University\\
\texttt{\{zwl666,yanyichao\}@sjtu.edu.cn} \\
}
\begin{document}

\maketitle

\begin{abstract}
Flow matching has recently emerged as a powerful alternative to diffusion models, providing a continuous-time formulation for generative modeling and representation learning. Yet, we show that this framework suffers from a fundamental instability in the low-noise regime. As noise levels approach zero, arbitrarily small perturbations in the input can induce large variations in the velocity target, causing the condition number of the learning problem to diverge. This ill-conditioning not only slows optimization but also forces the encoder to reallocate its limited Jacobian capacity toward noise directions, thereby degrading semantic representations. We provide the first theoretical analysis of this phenomenon, which we term the \textbf{low-noise pathology}, establishing its intrinsic link to the structure of the flow matching objective. Building on these insights, we propose \textbf{Local Contrastive Flow} (LCF), a hybrid training protocol that replaces direct velocity regression with contrastive feature alignment at small noise levels, while retaining standard flow matching at moderate and high noise. Empirically, LCF not only improves convergence speed but also stabilizes representation quality. Our findings highlight the critical importance of addressing low-noise pathologies to unlock the full potential of flow matching for both generation and representation learning.
\end{abstract}

\section{Introduction}
Generative models~(\cite{ho2020denoising,song2020score,lipman2022flow,liu2022flow}) have become central to modern machine learning, delivering state-of-the-art performance in tasks ranging from image~(\cite{esser2024scaling}) and speech synthesis~(\cite{liu2023generative}) to molecular design~(\cite{irwin2024efficient}) and scientific simulation~(\cite{wildberger2023flow}). Models such as diffusion models and flow models not only excel at producing high-fidelity samples but also offer the intriguing promise of learning representations that capture the underlying semantic structure of data~(\cite{li2023your,xiang2023denoising,chen2024deconstructing}). This dual capability has fueled growing interest in leveraging generative models as universal tools for both data generation and representation learning, bridging synthesis and understanding within a unified framework.

Flow models, in particular, provide a principled and efficient alternative to score-based diffusion models, enabling direct learning of continuous dynamics that map simple distributions to complex data. A key feature of these models is their training objective, which enforces consistency of the predicted velocity fields across multiple noise scales. This multi-scale supervision not only stabilizes training but also enforces semantic consistency of the data across different variations. Such a formulation naturally aligns with the objectives of representation learning~(\cite{chen2020simple,hadsell2006dimensionality,he2020momentum}), prompting a fundamental question: \textit{Can generative models reliably learn useful representations directly from clean data}? Intuitively, one might expect that the cleaner the data, the easier it should be for a model to extract semantic information. Clean data lies close to the true data manifold, free of artificial distortions, and is often the most desirable input for downstream tasks such as classification, clustering, and segmentation.

Surprisingly, however, our empirical evidence suggests a starkly different reality. We evaluated the target loss and representation quality under different noise intervals. As shown in Figure \ref{intro_img1}, when training on samples with very low noise levels, which correspond to small time values in flow-based models, generative models often encounter significant learning difficulties. The loss curves exhibit poor convergence behavior, and the quality of the learned representations is substantially lower than that obtained at moderate noise levels. This occurs precisely in the regime where the input data are nearly identical to the original clean examples, creating an apparent paradox: \textit{why does training become more difficult as data approaches its clean form?}
\begin{figure}[h]
\centering
\vspace{-5mm}
\includegraphics[width=\linewidth]{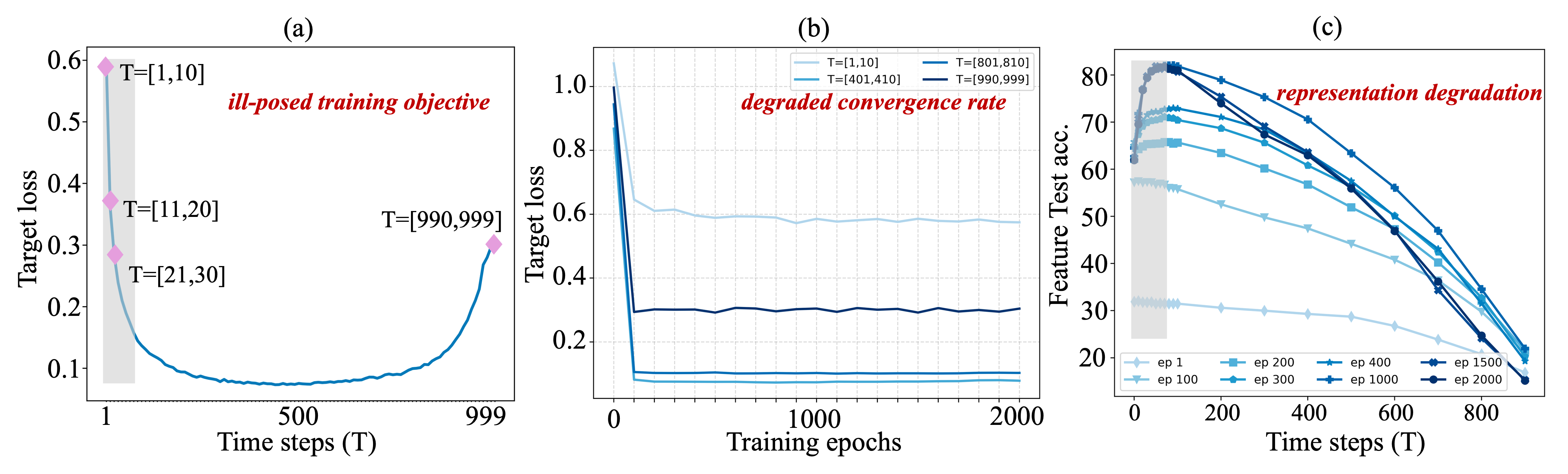}
\caption{\textbf{Low-noise pathology in flow matching.} \textbf{(a) Ill-posed objective:} the objective loss in low-noise regions significantly larger than at moderate or high noise levels, even compared to fully noisy inputs.  
\textbf{(b) Degraded convergence:} gradient-based optimization converges extremely slowly in the low-noise regime, so that training objective at small $t$ dominate the overall loss landscape and become the primary bottleneck in the later stages of training.  
\textbf{(c) Representation degradation:}  representation quality does not decrease monotonically with $t$, exhibiting an anomalous degradation under extremely low-noise regime. We train on CIFAR-10 and evaluate using the test set.}
\label{intro_img1}
\vspace{-5mm}
\end{figure}

This phenomenon, which we term the \textbf{low-noise pathology}, is the central focus of our work. In rectified flow models, data samples are interpolated between clean data and random noise through a time-dependent process defined as $x_t = (1-t) \cdot x_0 + t \cdot \epsilon$. As $t$ decreases toward zero, the supervision targets, such as the vector field $x_0 - \epsilon$, remain large and highly variable, while the differences between inputs simultaneously shrink because the contribution of $\epsilon$ to $x_t$ vanishes. This imbalance leads to a profound mismatch between the scale of input perturbations and the magnitude of the outputs that the model is required to predict.

We demonstrate that this mismatch induces a sharp increase in the condition number~(\cite{Belsley1980,Pesaran2015}) of the learning problem as the noise level decreases. In particular, we prove that the ratio between the magnitude of the output variations required by the model and the corresponding input differences scales inversely with the noise parameter $t$. As $t \to 0$, this ratio diverges, resulting in an increasingly ill-conditioned loss landscape that severely impedes optimization. This phenomenon offers a theoretical explanation for the empirical difficulties observed in the low-noise regime. Furthermore, the instability extends beyond optimization: it undermines the quality of learned representations. Features extracted under these conditions often degenerate, losing their semantic discriminability and thus their utility for downstream tasks. 

To address these issues, we present the first rigorous theoretical analysis of the low-noise pathology in flow models. We trace the root of this instability to the statistical properties of the target vector field, revealing that such numerical issues emerge as an inherent consequence of the generative modeling objective itself. 
Building on these insights, we propose a novel training method termed \textbf{Local Contrastive Flow}. In contrast to conventional flow matching approaches that apply direct supervision at extremely low noise levels, our method adopts a contrastive learning paradigm. Representations obtained at moderate noise levels, where training dynamics are more stable and semantic information is better preserved, are used as anchors. The model is trained to align the representations of slightly perturbed versions of the same input with these anchors, while simultaneously encouraging greater separation between representations of different inputs. Since training at moderate noise levels preserves semantic structure without inducing numerical instability, this alignment enables the transfer of robust semantic features to representations at lower noise levels. As a result, LCF not only accelerates the training process but also improves the semantic quality of the learned representations, enhancing their effectiveness in downstream tasks.

We validate our approach through comparative experiments, demonstrating that LCF significantly improves both convergence speed and the quality of learned representations in flow models. Our results highlight that understanding and resolving the low-noise pathology is critical for unlocking the full potential of generative models as tools for both data generation and representation learning.

In summary, our contributions are as follows:

$\bullet$ We identify and formalize the low-noise pathology in flow models, revealing that such numerical issues emerge as an inherent consequence of the generative modeling objective itself.

$\bullet$ We further demonstrate how the low-noise pathology undermines the quality of learned representations, expanding the understanding of low-noise pathology beyond numerical optimization to representational effectiveness.

$\bullet$ We propose Local Contrastive Flow, a principled approach to accelerate generative training while alleviating the low-noise pathology.

\section{Background}
Recent advances in large multimodal foundation models have pushed the frontier of unifying generation and understanding within a single architecture. Inspired by large language models such as GPT-4o~(\cite{gpt4o}) and Gemini 2.0~(\cite{gemini2}), recent efforts including Emu~(\cite{emu}), Chameleon~(\cite{chameleon}), and Metamorph~(\cite{metamorph}) introduce early-fusion strategies and unified token spaces to support cross-modal pretraining. Instruction-tuned variants such as Emu3~(\cite{emu3}) and Janus~(\cite{janus}), together with scaling approaches in Janus-pro~(\cite{januspro}) and Show-o~(\cite{showo}), highlight that task alignment and large-scale training are central to improving both reasoning and generative capabilities. Despite these empirical successes, our theoretical understanding of how such models jointly acquire generative and discriminative skills remains limited.

In parallel, diffusion-based generative models have demonstrated strong representational power beyond data synthesis. A growing body of work~(\cite{ddae,ddaepp,repentangle,representationalignment,diffuseanddisperse,lowdimdyn,stablefeat,reproge}) shows that denoising objectives can yield transferable, semantically structured embeddings. Crucially, training under moderate noise levels has been observed to preserve feature continuity, whereas very low noise often leads to degraded representations~(\cite{lownoise}). These findings suggest that generative supervision acts not only as a synthesis signal but also as an implicit representation-learning objective, raising questions about how this objective shapes feature geometry and generalization.

Taken together, these two perspectives motivate our study of flow matching in the low-noise regime. While unified models rely on generative supervision to balance synthesis and understanding, the low-noise pathology threatens this balance by destabilizing both optimization and representation quality. Our work provides a theoretical framework for analyzing this issue and introduces a practical remedy that preserves generative fidelity while stabilizing learned representations.

\section{Preliminaries}
Flow matching (\cite{lipman2022flow,liu2022flow}) provides a continuous-time framework for generative modeling in which data samples are gradually corrupted by noise and then recovered by learning an instantaneous velocity field. Let $x_0 \sim p(x)$ be a data point and $\varepsilon \sim \mathcal{N}(0, I)$ independent noise. For $t \in [0, 1]$, define
\begin{equation}
    x_t = \alpha_t \, x_0 + \beta_t \, \varepsilon, \quad \alpha_0 = 1, \, \beta_0 = 0, \quad \alpha_1 = 0, \, \beta_1 = 1,
\end{equation}
where $\alpha_t$ decreases and $\beta_t$ increases in $t$. Under mild regularity, there exists a probability-flow ordinary differential equation (PF-ODE)
\begin{equation}
    \dot{x}_t = v(x_t, t),
\end{equation}
whose marginal at time $t$ matches the law of the interpolation $x_t$. In practice, one approximates the velocity field
\begin{equation}
    v(x, t) = \mathbb{E}\left[ \dot{x}_t \mid x_t = x \right] = \alpha_t' \, \mathbb{E}\left[ x_0 \mid x_t = x \right] + \beta_t' \, \mathbb{E}\left[ \varepsilon \mid x_t = x \right],
\end{equation}
by a neural network $v_\theta(x, t)$. 
Training proceeds by minimizing the mean-squared velocity matching loss
\begin{equation}
    \mathcal{L}_{\text{flow}}(\theta) = \mathbb{E}_{x_0, \varepsilon, t} \left\| v_\theta(x_t, t) - v^*(x_t,t) \right\|^2,
\end{equation}
where $v^*(x_t,t)=\alpha_t' \, x_0 + \beta_t' \, \varepsilon$ and $t$ is drawn (e.g., uniformly) from $[0, 1]$. Under sufficient model capacity, minimizing $\mathcal{L}_{\text{flow}}$ yields a vector field whose backward integration from pure Gaussian noise at $t = 1$ to $t = 0$ recovers high-fidelity samples from the data distribution $p(x)$.

When desired, one may equivalently generate samples via the reverse stochastic differential equation (SDE)
\begin{equation}
    \mathrm{d}x_t = v_\theta(x_t, t) \, \mathrm{d}t - \frac{1}{2} \omega_t \, s(x_t, t) \, \mathrm{d}t + \sqrt{\omega_t} \, \mathrm{d}\bar{W}_t,
\end{equation}
where $s(x, t) = -\sigma_t^{-1} \mathbb{E}\left[ \varepsilon \mid x_t = x \right]$ is the score function and $\omega_t$ a suitable diffusion coefficient (\cite{ma2024sit}). In either ODE or SDE form, the flow-matching framework elegantly unifies score-based and diffusion-based generative modeling, while enabling efficient likelihood evaluation and sampling via off-the-shelf ODE/SDE solvers.

\section{Ill-Posedness in the Low-Noise Regime}
\label{ill}
While flow matching offers a powerful framework for generative modeling, its behavior under low-noise conditions reveals a fundamental ill-posedness that hinders both optimization and representation learning. This section investigates how this pathology arises in the low noise regime, why it reflects an inherent limitation of the flow matching objective, and what implications it bears for optimization and generalization.

\subsection{Divergence of condition number under flow matching}

In statistical learning, the condition number measures how sensitive a function is to input perturbations, which is widely used to analyze optimization dynamics and generalization (\cite{demmel1987condition,scaman2017optimal,ji2021bilevel}). We thus introduce the local condition number
$\kappa = \frac{\|\Delta v\|}{\|\Delta x\|}$, along with Proposition \ref{proposition2}.


\begin{proposition}[Divergent Condition Number]
\label{proposition2}
Let $x_0\in\mathbb R^d$ be fixed with $\|x_0\|_2\le B$ for some constant $B>0$.  
Consider the interpolation family
$
x_t=\alpha_t x_0 + \beta_t \varepsilon,
$
where $\alpha,\beta\in C^1([0,1])$, $\alpha_0=1,\beta_0=0$, and for $t>0$ we have $\beta_t>0$.
Fix two times $t_1,t_2>0$ and draw independent noise realizations $\varepsilon_1,\varepsilon_2\sim\mathcal N(0,I_d)$.  Let
$
\Delta v := v^\star(x_{t_1}^{(1)},t_1) - v^\star(x_{t_2}^{(2)},t_2),
\Delta x := x_{t_1}^{(1)} - x_{t_2}^{(2)},
$
where $x_{t_i}^{(i)}=\alpha_{t_i}x_0+\beta_{t_i}\varepsilon_i$. If $t_1, t_2 \approx t$, then the condition number satisfies
\begin{equation}
\kappa_{E}(t_1,t_2; x_0) \;:=\; \frac{\sqrt{\mathbb{E}_{\varepsilon_1,\varepsilon_2}\|\Delta v\|_2^2}}{\sqrt{\mathbb{E}_{\varepsilon_1,\varepsilon_2}\|\Delta x\|_2^2}}\gtrsim \;\frac{\beta_t'}{\beta_t} \;\xrightarrow[t\downarrow 0]{}\; \infty.
\end{equation}
\end{proposition}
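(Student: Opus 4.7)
The plan is to evaluate the numerator and denominator of $\kappa_E$ in closed form by exploiting the independence and Gaussianity of $\varepsilon_1,\varepsilon_2$, strip out the cross terms, and then observe that the $\alpha$-contributions become negligible relative to the noise contributions when $t_1,t_2$ are both close to $t$, so that the ratio reduces to $\beta_t'/\beta_t$.

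First I would expand the differences directly from the definitions,
\begin{align*}
\Delta v &= (\alpha_{t_1}'-\alpha_{t_2}')\,x_0 + \beta_{t_1}'\varepsilon_1 - \beta_{t_2}'\varepsilon_2,\\
\Delta x &= (\alpha_{t_1}-\alpha_{t_2})\,x_0 + \beta_{t_1}\varepsilon_1 - \beta_{t_2}\varepsilon_2,
\end{align*}
and take expectations. Since $\varepsilon_1\perp\varepsilon_2$, $\mathbb{E}[\varepsilon_i]=0$, and $\mathbb{E}\|\varepsilon_i\|_2^2=d$, every cross term between $x_0$ and $\varepsilon_i$ as well as between $\varepsilon_1$ and $\varepsilon_2$ vanishes, yielding
\begin{align*}
\mathbb{E}_{\varepsilon_1,\varepsilon_2}\|\Delta v\|_2^2 &= (\alpha_{t_1}'-\alpha_{t_2}')^2\|x_0\|_2^2 + \bigl((\beta_{t_1}')^2+(\beta_{t_2}')^2\bigr)\,d,\\
\mathbb{E}_{\varepsilon_1,\varepsilon_2}\|\Delta x\|_2^2 &= (\alpha_{t_1}-\alpha_{t_2})^2\|x_0\|_2^2 + \bigl(\beta_{t_1}^2+\beta_{t_2}^2\bigr)\,d.
\end{align*}
To extract the asserted lower bound, I would drop the nonnegative $\alpha$-term in the numerator and use the $C^1$ estimate $|\alpha_{t_1}-\alpha_{t_2}|\leq \|\alpha'\|_\infty|t_1-t_2|$ together with $\|x_0\|_2\leq B$ to upper bound the $\alpha$-term in the denominator. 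As $t_1,t_2\to t$, the $\alpha$-contribution to the denominator becomes negligible compared to the noise term $\sim 2\beta_t^2 d$, while the numerator is bounded below by $\sim 2(\beta_t')^2 d$. Taking the square-root ratio then recovers $\kappa_E(t_1,t_2;x_0)\gtrsim \beta_t'/\beta_t$, and the divergence $\beta_t'/\beta_t\to\infty$ as $t\downarrow 0$ follows because $\beta_0=0$ while $\beta_t'$ remains bounded away from zero for any standard schedule (e.g.\ $\beta_t=t$ gives $\beta_t'/\beta_t=1/t$).

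The main technical subtlety is making the informal phrase ``$t_1,t_2\approx t$'' precise enough that the $\alpha$-induced terms in the denominator do not spoil the bound. The cleanest resolution is to specialize to the equal-time case $t_1=t_2=t$ with two independent noise draws: then both $\alpha$-differences are exactly zero and one obtains the equality $\kappa_E(t,t;x_0)=\beta_t'/\beta_t$ directly. A short extension to nearby $t_1\neq t_2$ only requires the quantitative condition $\|\alpha'\|_\infty|t_1-t_2|\cdot B\ll \beta_t\sqrt d$, which is easily satisfied in any regime of interest and does not affect the qualitative conclusion that $\kappa_E$ blows up as $t\downarrow 0$.
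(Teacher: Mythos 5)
Your proposal is correct and follows essentially the same route as the paper's proof: expand $\Delta v$ and $\Delta x$, use independence and zero-mean Gaussianity to get the exact second moments, drop the nonnegative $\alpha'$-term in the numerator, and note that at (or near) $t_1=t_2=t$ the $\alpha$-contribution to the denominator vanishes (or is negligible), so the ratio reduces to $\beta_t'/\beta_t$, which diverges under the assumed schedule asymptotics. Your equal-time specialization giving the exact identity $\kappa_E(t,t;x_0)=\beta_t'/\beta_t$ is a slightly cleaner way to handle the informal ``$t_1,t_2\approx t$'' than the paper's asymptotic domination argument, but the substance is the same.
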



This infinite ill-conditioning in Proposition \ref{proposition2} implies that arbitrarily small deviations in the noisy input can cause large output variations, rendering the inverse problem of estimating $v^*$ from $x_t$ severely ill-posed in the low-noise regime.
Such ill-posedness has direct consequences for learning. Gradient-based optimization of the flow-matching objective relies on smooth variations of the velocity field over the data manifold. However, in the low-noise regime, the velocity target remains large and highly variable, while the differences between inputs simultaneously shrink. As a result, the gradient landscape becomes increasingly sharp and unstable, leading to degraded convergence rate.

\subsection{Degraded convergence rate under low-noise regime}
\label{}
A central question in understanding the pathological behavior of flow matching in the low-noise regime is how the training dynamics slow down and eventually stagnate. While standard analyses guarantee polynomial or exponential convergence rates in terms of sample size and optimization steps, our results indicate that in the low-noise setting the convergence rate can degrade substantially due to intrinsic operator ill-conditioning.

The key observation is that infinite ill-conditioning implies that even infinitesimal perturbations in the noisy input $x_t$ can induce disproportionately large variations in the velocity target $v^*(x_t, t)$. This makes the regression problem of estimating $v^*$ from $x_t$ fundamentally unstable. Beyond this conceptual instability, optimization itself becomes adversely affected: the training landscape grows increasingly ill-conditioned as $t \to 0$, directly slowing down convergence. Subject to some necessary assumptions, we formalize this below.
\begin{proposition}[Linear Convergence of Gradient Descent (\cite{nesterov2013introductory})]
\label{proposition3}
Let
$\mathcal{L}(\theta)$
be twice continuously differentiable in a neighborhood of its global minimizer $\theta^*$ and $\{\theta_k\}_{k \geq 0}$ evolve under $\theta_{k+1} = \theta_k - \eta \nabla \mathcal{L}(\theta_k)$
with $\eta \in (0, 2/L)$. Assume there exist constants $0<\mu\le L$ such that for all $\theta$ in that neighborhood
$
\mu I \preceq \nabla^2 \mathcal{L}(\theta)\preceq L I.
$
Then to achieve $\|\theta_k-\theta^\star\|_2\le\varepsilon$ requires
\begin{equation}
k \;\ge\; \Omega\!\Big(\kappa\log\frac{1}{\varepsilon}\Big),
\end{equation}
where $\kappa:=L/\mu$ is the Hessian condition number.
\end{proposition}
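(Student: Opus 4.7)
The plan is to establish the lower bound by exhibiting a worst-case instance satisfying the stated $\mu$-strong-convexity and $L$-smoothness hypotheses on which gradient descent, for \emph{every} admissible step size $\eta\in(0,2/L)$, requires at least $\Omega(\kappa\log(1/\varepsilon))$ iterations to reach the target accuracy. This is the only sensible reading of the statement, since a well-conditioned objective ($\mu=L$) converges in one step; the bound is therefore a worst-case claim over problems obeying the spectral sandwich $\mu I\preceq\nabla^2\mathcal L\preceq LI$.

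First, I would take the quadratic benchmark $\mathcal L(\theta)=\tfrac12(\mu\theta_1^2+L\theta_2^2)$ on $\mathbb R^2$, whose Hessian is constantly $\mathrm{diag}(\mu,L)$ and thus saturates both spectral bounds globally. Its unique minimizer is $\theta^\star=0$. Gradient descent decouples across the eigendirections, yielding the closed-form recursion
\[
\theta_1^{(k)}=(1-\eta\mu)^k\,\theta_1^{(0)},\qquad \theta_2^{(k)}=(1-\eta L)^k\,\theta_2^{(0)}.
\]
Choosing the adversarial initialization $\theta^{(0)}=(1,0)$, which concentrates all error in the slow (stiff) direction, gives $\|\theta^{(k)}-\theta^\star\|_2=|1-\eta\mu|^k$.

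Next, I would convert the step-size constraint into a uniform lower bound on the per-step contraction. Since $\eta<2/L$ forces $\eta\mu<2\mu/L=2/\kappa$, the case $\kappa\ge 2$ (the only nontrivial regime) guarantees $\eta\mu\in(0,1)$, hence $|1-\eta\mu|=1-\eta\mu\ge 1-2/\kappa$. Consequently
\[
\|\theta^{(k)}-\theta^\star\|_2\ge(1-2/\kappa)^k,
\]
and imposing $\|\theta^{(k)}-\theta^\star\|_2\le\varepsilon$ forces
\[
k\;\ge\;\frac{\log(1/\varepsilon)}{\log\bigl(1/(1-2/\kappa)\bigr)}.
\]
Applying the elementary inequality $\log(1/(1-x))\le 2x$ for $x\in[0,1/2]$, valid here whenever $\kappa\ge 4$, bounds the denominator by $4/\kappa$, which delivers $k\ge(\kappa/4)\log(1/\varepsilon)=\Omega(\kappa\log(1/\varepsilon))$. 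For $\kappa<4$ the bound collapses to $\Omega(\log(1/\varepsilon))$, which is absorbed into the constant.

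The main subtlety, rather than a true obstacle, is bookkeeping around the quantifiers and the edge of the step-size range. One must verify that the estimate is genuinely uniform over $\eta\in(0,2/L)$: very small $\eta$ only shrinks $\eta\mu$ further, making the slow direction contract even more weakly and strengthening the bound, while $\eta$ approaching $2/L$ is precisely the tight regime handled above. The degenerate case $1-\eta\mu=0$ would require $\eta=1/\mu$, which is incompatible with $\eta<2/L$ once $\kappa>2$ and can therefore be ignored. A final remark: the statement isolates GD specifically, not arbitrary first-order schemes, so one does not need Nesterov's $\sqrt{\kappa}$ oracle-model machinery; the coordinate-wise analysis of this single quadratic suffices.
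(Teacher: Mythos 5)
Your proof is correct, but it takes a genuinely different route from the paper, which in fact offers no standalone proof of this proposition at all: it is imported as a classical result (cited to Nesterov), and where the appendix touches it (step 4 of the proof of the slow-convergence proposition) the paper only runs the sufficiency direction, $\|\theta_k-\theta^\star\|_2^2\le(1-\mu/L)^k\|\theta_0-\theta^\star\|_2^2$, which yields the upper bound $k=O(\kappa\log(1/\varepsilon))$ rather than the stated lower bound $k\ge\Omega(\kappa\log(1/\varepsilon))$. You instead supply exactly what the $\Omega$-statement literally requires: a worst-case instance. Your diagonal quadratic $\tfrac12(\mu\theta_1^2+L\theta_2^2)$ with adversarial initialization in the slow eigendirection, together with the uniform bound $\eta\mu<2/\kappa$ over all admissible $\eta\in(0,2/L)$ and the elementary estimate $\log\bigl(1/(1-x)\bigr)\le 2x$ for $x\le 1/2$, cleanly gives $k\ge(\kappa/4)\log(1/\varepsilon)$ for $\kappa\ge 4$, and your handling of the quantifiers (uniformity in $\eta$, the $\eta=1/\mu$ degeneracy, small-$\kappa$ regime) is sound. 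You are also right that the proposition can only be read as a worst-case claim over instances and initializations --- a point the paper leaves implicit. What each approach buys: the paper's citation-plus-contraction treatment is all it needs downstream (its later proposition only invokes the $O(\kappa\log(1/\varepsilon))$ guarantee and the heuristic that large $\kappa$ slows convergence), whereas your construction actually substantiates the "requires at least" phrasing as stated, at the cost of restricting attention to plain gradient descent (appropriately, since no claim about general first-order methods or Nesterov's $\sqrt{\kappa}$ oracle lower bound is being made).
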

\begin{proposition}[Slow Convergence under Low-noise Regime]
\label{corollary1}
Let $v_\theta(x,t)$ be a twice differentiable parametric model and consider the flow matching loss $\mathcal{L}_{\text{flow}}(\theta)$.
Suppose in a neighborhood of a local minimizer $\theta^\star$ the Gauss--Newton approximation is valid so that the Hessian satisfies
\(H(\theta^\star)\approx 2\,\mathbb{E}[J_{\theta^\star}^\top J_{\theta^\star}]\),
and assume model Jacobians are operator-norm bounded (finite sensitivity per parameter).
Then there exist positive constants $C,c$ (depending on model Jacobian bounds and data moments) such that for sufficiently small $t$:
$
\kappa\big(H(\theta^\star)\big)\;\ge\; C\left(\frac{\beta_t'}{\beta_t}\right)^{2} - c.
$
Consequently, to reach $\|\theta_k-\theta^\star\|\le\varepsilon$, gradient descent requires at least
\begin{equation}
k \;=\; \Omega\!\left( \left(\frac{\beta_t'}{\beta_t}\right)^{2} \log\frac{1}{\varepsilon} \right).
\end{equation}
\end{proposition}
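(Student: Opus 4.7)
The plan is to combine the Gauss--Newton approximation with Proposition \ref{proposition2} to obtain an intrinsic lower bound on the Hessian condition number, and then invoke Proposition \ref{proposition3} to translate this into an iteration complexity. Write $M:=\mathbb{E}[J_{\theta^\star}^\top J_{\theta^\star}]$, so that $H(\theta^\star)\approx 2M$ and $\kappa(H(\theta^\star))=\lambda_{\max}(M)/\lambda_{\min}(M)$ up to an absolute constant. The two ingredients are an upper bound on $\lambda_{\min}(M)$ and a lower bound on $\lambda_{\max}(M)$.

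First, I would control the denominator using the per-parameter sensitivity bound. From $\lambda_{\min}(M)\le \mathrm{tr}(M)/p$ together with $\mathrm{tr}(M)=\sum_{i}\mathbb{E}\|\partial v_{\theta^\star}/\partial\theta_i\|^2\le p\cdot\text{const}$, one obtains $\lambda_{\min}(M)\le c$ for an architecture-dependent constant $c$; this absorbs into the additive ``$-c$'' in the claim.

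Second, I would lower-bound $\lambda_{\max}(M)=\sup_{\|u\|=1}\mathbb{E}\|J_{\theta^\star}u\|^2$. At the optimum, realizability gives $v_{\theta^\star}\equiv v^\star$, and Proposition \ref{proposition2} applied at $t_1,t_2\approx t$ with independent noises yields
\begin{equation}
\mathbb{E}\bigl\|v_{\theta^\star}(x_{t_1}^{(1)},t_1)-v_{\theta^\star}(x_{t_2}^{(2)},t_2)\bigr\|_2^2 \;\gtrsim\; \bigl(\beta_t'/\beta_t\bigr)^2\,\mathbb{E}\|x_{t_1}^{(1)}-x_{t_2}^{(2)}\|_2^2 .
\end{equation}
A first-order Taylor expansion of $v_\theta$ in $\theta$ around $\theta^\star$, applied along the direction that realizes the required output shift between the two samples, produces a unit vector $u\in\mathbb{R}^{p}$ with $u^\top M u=\mathbb{E}\|J_{\theta^\star}u\|^2\gtrsim(\beta_t'/\beta_t)^2$, hence $\lambda_{\max}(M)\ge C(\beta_t'/\beta_t)^2$. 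Combining the two eigenvalue estimates gives $\kappa(H(\theta^\star))\ge C(\beta_t'/\beta_t)^2 - c$, and substituting into Proposition \ref{proposition3} immediately yields $k\ge\Omega\bigl((\beta_t'/\beta_t)^2\log(1/\varepsilon)\bigr)$.

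The hard part will be the transfer step: converting the input-space amplification of $v^\star$ into the parameter-space spectral bound $u^\top M u\gtrsim(\beta_t'/\beta_t)^2$. In principle, a sufficiently overparameterized network could reproduce a high-frequency target with uniformly small per-parameter derivatives, decoupling the input- and parameter-space Jacobian norms. Closing this gap requires the ``finite sensitivity per parameter'' hypothesis to play a non-degeneracy role, ensuring that at least one unit direction in $\mathbb{R}^{p}$ faithfully realizes the mandated amplification. I would formalize this through an NTK-style alignment between $\partial v_\theta/\partial x$ and $\partial v_\theta/\partial\theta$ at $\theta^\star$, or equivalently by restricting attention to architectures whose input Jacobian varies non-trivially in at least one bounded parameter direction; this is where the careful technical work lies.
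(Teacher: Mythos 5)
Your overall skeleton (Gauss--Newton Hessian, separate bounds on $\lambda_{\max}$ and $\lambda_{\min}$ of $M=\mathbb{E}[J_{\theta^\star}^\top J_{\theta^\star}]$, then Proposition \ref{proposition3}) matches the paper, but the way you generate the divergence is the wrong way around and the key step is left unproven. You want $\lambda_{\max}(M)\gtrsim(\beta_t'/\beta_t)^2$ while $\lambda_{\min}(M)\le c$. Under the stated hypothesis that the model Jacobians are operator-norm bounded (the same hypothesis you invoke to bound $\mathrm{tr}(M)$), one has $u^\top M u=\mathbb{E}\|J_{\theta^\star}u\|_2^2\le L^2$ for every unit $u$, so $\lambda_{\max}(M)$ is bounded by a $t$-independent constant and cannot grow like $(\beta_t'/\beta_t)^2$; for small $t$ your claimed lower bound contradicts the assumption you are using elsewhere. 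Moreover, the step that is supposed to deliver that lower bound --- transferring the input-space amplification $\mathbb{E}\|\Delta v\|^2\gtrsim(\beta_t'/\beta_t)^2\,\mathbb{E}\|\Delta x\|^2$ into a parameter direction with $u^\top M u\gtrsim(\beta_t'/\beta_t)^2$ --- is not a Taylor expansion in $\theta$ at all (both evaluations are at the same $\theta^\star$, only the inputs differ), and you explicitly concede that an overparameterized network can decouple $\partial v/\partial x$ from $\partial v/\partial\theta$. That ``NTK-style alignment'' is precisely the missing argument, so the central inequality of the proposition is not established.

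The paper's proof takes the complementary route, which is the one compatible with bounded Jacobians: the divergence of $\kappa$ comes from the \emph{smallest} eigenvalue collapsing, not the largest one exploding. Concretely, under its richness assumption (A2) it argues $\lambda_{\max}(G)\ge c_1\,\mathbb{E}\|\Delta v\|_2^2$ (some parameter direction must produce output variation on the order of the target differences across high-gain sample pairs, or the loss could not be near a minimum), and under the Lipschitz caps (A3) it argues $\lambda_{\min}(G)\le c_2\,\mathbb{E}\|\Delta x\|_2^2$ (some direction's induced output perturbation is smooth in the input, so its average power over pairs with tiny input separation is tiny). Since $\mathbb{E}\|\Delta v\|^2\asymp d\beta_t'^2$ stays bounded for typical schedules while $\mathbb{E}\|\Delta x\|^2\asymp d\beta_t^2\to0$, the ratio gives $\kappa(H)\gtrsim(\beta_t'/\beta_t)^2$ with both individual bounds finite, and Proposition \ref{proposition3} then yields the iteration complexity. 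To repair your argument you would either need to adopt this $\lambda_{\min}$-vanishing mechanism (your trace bound $\lambda_{\min}\le\mathrm{tr}(M)/p\le c$ is too weak, since a constant upper bound on $\lambda_{\min}$ alone produces no divergence) or add a substantive alignment assumption strong enough to force $\lambda_{\max}$ to track the input-space gain --- which would then conflict with the operator-norm cap in the statement.
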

Proposition \ref{corollary1} shows that as the noise parameters $t \to 0$, the effective condition number diverges, and the optimal convergence rate vanishes. This establishes a intuitive mechanism for \emph{degraded convergence} in the low-noise regime: the optimization speed is fundamentally limited by operator ill-conditioning. In practice, this also explains why representations exhibit slower improvement, stagnation, and eventual degradation as training progresses under low noise.

\subsection{Representation degradation under the low-noise regime}
\label{sec:rep_deg}

In this subsection we develop a principled account of \emph{representation degradation} from a geometric degeneracy perspective to explaining why a degradation in representation quality occurs for small $t$.

\paragraph{Setup and regularity.}
We consider the exact instantaneous velocity is
\(
v^\star(x_t,t) = \alpha_t' x_0 + \beta_t' \varepsilon
= (\beta_t'/\beta_t)\,x_t + \bigl(\alpha_t' - \alpha_t \beta_t'/\beta_t\bigr)x_0,
\)
so the linear map from perturbations in $x_t$ to perturbations in $v^\star$ is dominated by the scalar factor $\beta_t'/\beta_t = \Theta(1/t).$
Let $h=g_\ell(x)$ denote the representation at layer \(\ell\), adn view the target mapping $x_t\mapsto v^\star(x_t,t)$ as the linear operator (on perturbations) $M_t\approx (\beta_t'/\beta_t)I_d$ plus lower--order terms.  The model implements an approximant of $M_t$ via the Jacobian composition
$
\widehat M_t := J_u(h)\,J_g(x_t) = \tfrac{\partial u_\phi}{\partial h}\,\tfrac{\partial g_\ell}{\partial x}(x_t),
$
whose operator norm is bounded by $L_u L_g$ from \eqref{eq:op_caps}.
\begin{equation}
\label{eq:op_caps}
\|J_u(h)\|_{\mathrm{op}}\le L_u,\qquad \|J_g(x)\|_{\mathrm{op}}\le L_g.
\end{equation}
(These caps model weight decay, layer normalization, or other effective capacity constraints.)

Intuitively, the encoder $g_\ell:\mathcal X\to\mathcal H$ endows the data manifold with a pullback metric \(G(x_t)=J_g(x_t)^\top J_g(x_t)\) on the input space. Directions in input space that are \emph{collapsed} by the encoder correspond to small singular values of $J_g(x_t)$ and small eigenvalues of $G(x_t)$. Discriminative tasks rely on preserving certain manifold directions (those that separate classes or semantics). The flow–matching target, however, demands strong sensitivity in the directions where $M_t$ has large action. When the demanded sensitivity of $M_t$ outstrips the maximal implementable sensitivity of the composed Jacobian $J_u J_g$, the encoder must concentrate its limited Jacobian norm on the directions that reduce the main loss contribution; this necessarily reduces sensitivity in other (semantic) directions, producing a form of \emph{degeneracy} that degrades representation quality.
\begin{proposition}[Necessary Jacobian reallocation under high target gain]
\label{prop:jacobian_realloc}
Let \(S\subset\mathbb R^d\) be a linear subspace (the ``noise subspace'') and denote by \(P_S\) the orthogonal projector onto \(S\). Let the approximation residual be
$
r_t := \|\,P_S(M_t - \widehat M_t)\|_{\mathrm{op}}
$
(the operator norm of the projected residual onto \(S\)). Then the encoder Jacobian satisfies the lower bound
\begin{equation}
\label{eq:prop_jg}
\sup_{\substack{v\in S\\ \|v\|_2=1}} \|J_g(x_t)\,v\|_2
\;\ge\;
\frac{\|P_S M_t\|_{\mathrm{op}} - r_t}{L_u}.
\end{equation}
In words: matching the high sensitivity demanded by $M_t$ forces the encoder Jacobian to increase its gain along some noise direction.
\end{proposition}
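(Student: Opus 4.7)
The plan is to prove Proposition \ref{prop:jacobian_realloc} by a direct triangle-inequality argument that transfers the lower bound on $\|P_S M_t\|_{\mathrm{op}}$ through the approximation $\widehat M_t$, then exploits the factorization $\widehat M_t = J_u(h)\, J_g(x_t)$ together with the head cap $\|J_u\|_{\mathrm{op}} \le L_u$ to isolate the encoder Jacobian's action on the subspace $S$.

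Concretely, I would first fix an arbitrary unit vector $v \in S$ and write the identity
\[
P_S M_t v \;=\; P_S \widehat M_t v \;+\; P_S(M_t - \widehat M_t)\, v,
\]
so that the triangle inequality plus the definition of $r_t$ give $\|P_S \widehat M_t v\|_2 \ge \|P_S M_t v\|_2 - r_t$. The second step is a submultiplicative bound: since $\|P_S\|_{\mathrm{op}} \le 1$ and $\widehat M_t = J_u(h)\, J_g(x_t)$, we have
\[
\|P_S \widehat M_t v\|_2 \;\le\; \|J_u(h)\|_{\mathrm{op}}\,\|J_g(x_t)\, v\|_2 \;\le\; L_u\,\|J_g(x_t)\, v\|_2 .
\]
Chaining these two inequalities yields $\|J_g(x_t)\, v\|_2 \ge (\|P_S M_t v\|_2 - r_t)/L_u$ for every unit $v \in S$. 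Taking the supremum over unit $v \in S$ on both sides produces the desired bound with $\sup_{v \in S,\,\|v\|=1}\|P_S M_t v\|_2$ in the numerator.

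The main obstacle, and the one step that requires care, is reconciling this supremum with $\|P_S M_t\|_{\mathrm{op}}$ as written in the conclusion. For an arbitrary operator, the maximizer of $\|P_S M_t v\|_2$ over $\|v\|=1$ need not lie in $S$, so one only obtains a bound on the full $\|J_g\|_{\mathrm{op}}$. Here I would invoke the leading-order structure established in the preamble to Section~\ref{sec:rep_deg}, namely $M_t = (\beta_t'/\beta_t)\, I_d + E_t$ where $E_t$ collects the lower-order terms driven by the posterior-mean correction $\mathbb{E}[x_0 \mid x_t]$. On the principal part $(\beta_t'/\beta_t)\, I_d$, the operator commutes with $P_S$, so its projected operator norm $\|P_S M_t\|_{\mathrm{op}}$ is attained on $S$ and equals $\beta_t'/\beta_t$. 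The residual effect of $E_t$ contributes only an $O(1)$ perturbation compared to the $\Theta(1/t)$ principal part, and it can be absorbed into the definition of $r_t$ (or an implicit additive constant in the lower bound), which justifies writing $\|P_S M_t\|_{\mathrm{op}}$ on the right-hand side.

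Beyond that single alignment issue, the argument is mechanical: triangle inequality, submultiplicativity, and a supremum. The content of the proposition therefore rests entirely on (i) the operator-norm cap \eqref{eq:op_caps} on the head Jacobian, which is what forces the encoder to absorb the sensitivity demanded by $M_t$, and (ii) the fact that the flow-matching target's linearization $M_t$ has $\Theta(1/t)$ spectrum on the relevant subspace. The proposition is then the quantitative expression of the informal statement that the encoder Jacobian must reallocate capacity into the noise subspace $S$ whenever the residual $r_t$ is small relative to $\beta_t'/\beta_t$.
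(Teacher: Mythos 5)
Your proof is correct and follows essentially the same route as the paper's: triangle inequality on the projected residual, the operator-norm cap $L_u$ applied through the factorization $\widehat M_t = J_u(h)J_g(x_t)$, and a supremum over unit vectors in $S$. Your explicit handling of the mismatch between $\sup_{v\in S,\,\|v\|_2=1}\|P_S M_t v\|_2$ and $\|P_S M_t\|_{\mathrm{op}}$ is in fact more careful than the paper's own argument, which glosses over this by writing $\|M_t v\|_2=\|P_S M_t v\|_2$ for $v\in S$, i.e.\ by implicitly using the same near-scalar structure $M_t\approx(\beta_t'/\beta_t)I_d$ that you invoke.
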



The above lower bound shows that, as $\|P_{S}M_t\|$ grows (for typical schedules this growth is $\Theta(\beta_t'/\beta_t)$), the encoder must increase its Jacobian gain along some directions. Since the encoder Jacobian operator norm is globally capped by $L_g$, this increased gain comes at the expense of \emph{other} input directions. The encoder's Jacobian singular value spectrum must redistribute mass, reducing sensitivity along some semantic directions and creating a degeneracy in the pullback metric.
\begin{proposition}[Representation degradation under high target gain]
\label{prop:metric_deg}
Assume input space decomposes orthogonally as \(\mathbb{R}^d = S_{\mathrm{sem}} \oplus S_{\mathrm{noise}}\) with \(\dim S_{\mathrm{sem}} = r\). Class differences \(\Delta x_{c,c'} = x_0^{(c)} - x_0^{(c')} \in S_{\mathrm{sem}}\) and \(\delta_0 = \min_{c \neq c'} \|\Delta x_{c,c'}\|_2 > 0\).
Define the required noise--direction gain
$
g_{\mathrm{req}}(t)\;:=\;\frac{\|P_{S_{\mathrm{noise}}} M_t\|_{\mathrm{op}} - r_t}{L_u},
$
  and assume \(g_{\mathrm{req}}(t)>0\), \(Q(t) = \min_{c \neq c'} \|g_\ell(x_t^{(c)}) - g_\ell(x_t^{(c')})\|_2\) (encoded class separation). Then:
\begin{equation}
\label{Q_ub}
Q(t)\;\le\; \sqrt{B^2 - g_{\mathrm{req}}(t)^2}\;\,\delta_{\max}.
\end{equation}
Therefore, as \(g_{\mathrm{req}}(t)\uparrow\) (for instance when \(\|P_{S_{\mathrm{noise}}}M_t\|_{\mathrm{op}}\) grows like \(\beta_t'/\beta_t\)), the upper bound \eqref{Q_ub} decreases; when \(g_{\mathrm{req}}(t)^2\ge B^2\) the right-hand side is zero and encoded semantic separation can be forced to (near) zero.
\end{proposition}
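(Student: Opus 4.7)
The plan is to chain Proposition \ref{prop:jacobian_realloc} with a first-order Taylor propagation of the encoder along semantic directions, and close the argument with an orthogonal capacity identity on the Jacobian spectrum. Conceptually, the argument shows that the Jacobian ``budget'' spent on amplifying noise directions cannot simultaneously be spent on discriminating semantic directions.

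First, I would apply Proposition \ref{prop:jacobian_realloc} to extract a unit direction $v_0 \in S_{\mathrm{noise}}$ with $\|J_g(x_t)\,v_0\|_2 \ge g_{\mathrm{req}}(t)$. Second, for each class pair $(c,c')$ I would couple the two samples with a common noise realization, so that $x_t^{(c)}-x_t^{(c')} = \alpha_t\,\Delta x_{c,c'}$ lies entirely in $S_{\mathrm{sem}}$. The fundamental theorem of calculus applied along the straight segment then gives
\begin{equation*}
\|g_\ell(x_t^{(c)}) - g_\ell(x_t^{(c')})\|_2 \;\le\; \sup_{\xi} \bigl\|J_g(\xi)\big|_{S_{\mathrm{sem}}}\bigr\|_{\mathrm{op}} \cdot \|\alpha_t\,\Delta x_{c,c'}\|_2,
\end{equation*}
so the problem reduces to upper bounding $\|J_g|_{S_{\mathrm{sem}}}\|_{\mathrm{op}}$ in terms of $g_{\mathrm{req}}(t)$ and the capacity cap $B$.

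Third, for the capacity step, I would fix any unit $u \in S_{\mathrm{sem}}$, note that $u \perp v_0$ by the orthogonal decomposition, and extend $\{u,v_0\}$ to an orthonormal basis of $\mathbb{R}^d$. Invoking the capacity cap as a budget constraint of the form $B^2 \ge \sum_i \|J_g\,e_i\|_2^2$ immediately yields $\|J_g u\|_2^2 + \|J_g v_0\|_2^2 \le B^2$, hence $\|J_g u\|_2 \le \sqrt{B^2 - g_{\mathrm{req}}(t)^2}$. Taking the sup over $u\in S_{\mathrm{sem}}$ gives the restricted operator-norm bound; combining with the second step and absorbing $\alpha_t \le 1$ into $\|\Delta x_{c,c'}\|_2 \le \delta_{\max}$, then taking the minimum over class pairs, produces the claimed inequality on $Q(t)$.

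I expect the main obstacle to be the third step: upgrading a bare operator-norm cap into the Pythagorean-type identity $\|J_g u\|_2^2 + \|J_g v_0\|_2^2 \le B^2$. Under a pure operator-norm cap $\|J_g\|_{\mathrm{op}} \le B$, orthogonal inputs can in principle each saturate the gain $B$, and the naive parallelogram estimate only delivers $2B^2$ on the right-hand side. A clean argument therefore requires either (i) an additional orthogonality hypothesis on the output images $J_g u \perp J_g v_0$ -- legitimate under the reallocation picture in which noise-directed gain is channeled into a dedicated output direction -- or (ii) reading $B$ as a Frobenius/trace-type capacity constraint arising from weight decay or layer normalization, under which squared singular values are budgeted additively across an orthonormal basis. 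Once the capacity interpretation is pinned down, the remaining Taylor and projection steps are mechanical.
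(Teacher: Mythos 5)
Your proposal follows essentially the same route as the paper's proof: it invokes Proposition~\ref{prop:jacobian_realloc} to force $\|J_g(x_t)\,v_0\|_2 \ge g_{\mathrm{req}}(t)$ along a noise direction, trades this gain off against semantic-direction sensitivity via an additive capacity budget split across the orthogonal decomposition $S_{\mathrm{sem}}\oplus S_{\mathrm{noise}}$, and propagates the resulting cap to class separation by a first-order bound, exactly as the paper does. Your flagged ambiguity about the meaning of $B$ is resolved in favor of your option (ii): the paper's appendix statement explicitly assumes a Frobenius budget $\|J_g(x_t)\|_F \le B$ (not a bare operator-norm cap), so the Pythagorean step $\|J_g u\|_2^2 + \|J_g v_0\|_2^2 \le B^2$ is legitimate and your argument closes as described.
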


\begin{wrapfigure}{r}{0.5\linewidth}  
    \centering
    \includegraphics[width=\linewidth]{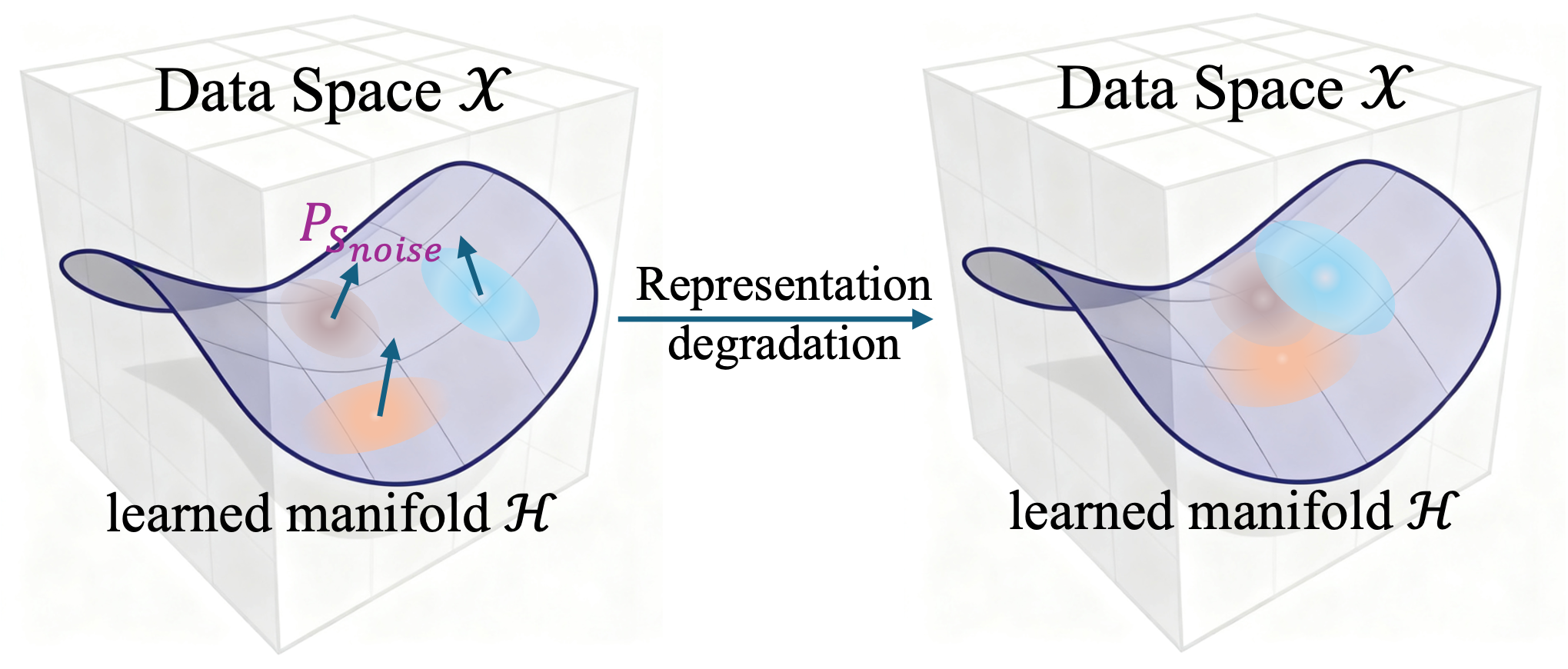}
    \caption{Representation degradation in low-noise regime: The learned manifold $\mathcal{H}$ loses semantic discriminability as noise level $t \to 0$.}
    \label{framework} 
\end{wrapfigure}
Propositions \ref{prop:jacobian_realloc} and \ref{prop:metric_deg} together show that when the target Jacobian's demand in certain (noise) directions exceeds the model's implementable Jacobian gain, the encoder is forced to reallocate its finite Jacobian capacity toward those noise directions. Because capacity is redistributed (not created), sensitivity in semantic directions falls: the pullback metric degenerates along task-relevant axes and downstream discriminative performance degrades. This yields the observed nonmonotonic pattern of representation quality as $t$ decreases: initially, moderate increases in sensitivity help, but once $M_t$'s demand passes a threshold the encoder sacrifices semantic directions and representation quality falls.

The theoretical insights further suggest concrete strategies for stabilizing training. Methods that avoid or regularize the problematic low-noise regime, such as truncating the training objective to $t \geq T_{\min}$, can directly counteract covariance collapse. Similarly, contrastive regularizers that enforce separation between features at small $t$ and their counterparts at moderate $t$ act as information-preserving constraints, preventing rank deficiency in the feature covariance. In all cases, the guiding principle is to mitigate the condition-number explosion by either bypassing the low-noise region or explicitly preserving the variance structure of intermediate features.

\section{Method: Local Contrastive Flow}
\label{sec:method}
\begin{figure}[h]
\centering
\includegraphics[width=\linewidth]{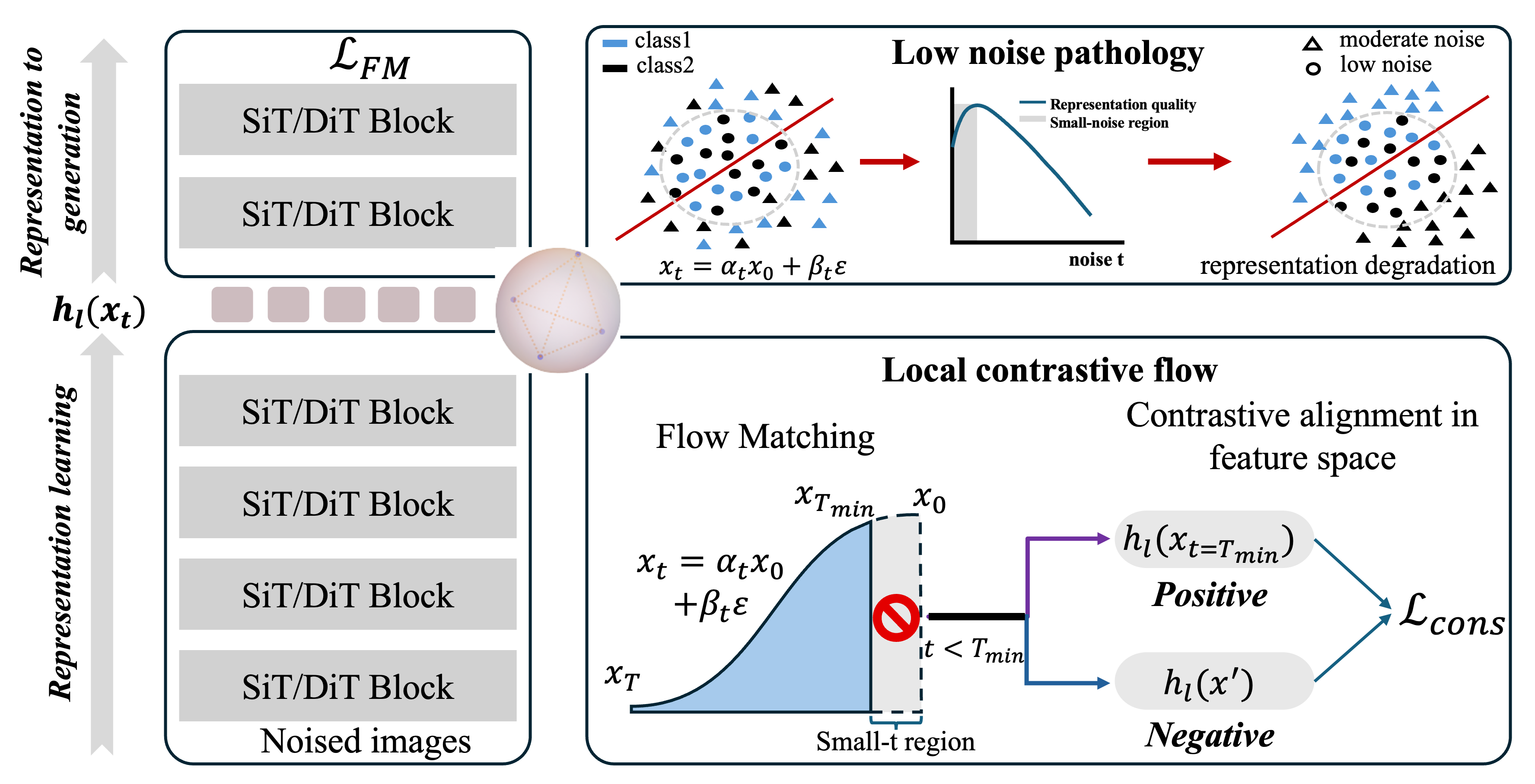}
\caption{Overview of Local Contrastive Flow (LCF). 
At moderate and high noise levels, standard flow matching is applied to learn the velocity field. 
For small noise ($t<T_{\min}$), anchors at $t=T_{\min}$ provide positive targets, and contrastive alignment prevents representation degradation and ill-conditioning.}
\label{fig:framework}
\end{figure}
To overcome the ill‑conditioning and representation degradation that afflict flow‑matching models at very low noise levels, we introduce \textbf{Local Contrastive Flow}. As illustrated in Figure~\ref{fig:framework}, Local Contrastive Flow partitions 
training into two regimes. For $t \geq T_{\min}$, we retain standard flow matching 
to preserve accurate velocity regression. For $t < T_{\min}$, anchors computed at 
$t = T_{\min}$ serve as positive targets, and contrastive alignment with other batch 
representations stabilizes training and prevents representation degradation. 
This hybrid protocol ensures well-conditioned optimization across all noise levels.

Concretely, let $x_{t}=\alpha_{t}x_{0}+\beta_{t}\varepsilon$ denote the interpolation from clean data $x_{0}$ to noise $\varepsilon$, and $v_{\theta}(x_{t},t)$ our neural approximation to the true velocity $v^{\star}(x_{t},t)=\alpha_{t}'x_{0}+\beta_{t}'\varepsilon$.  We choose a threshold $T_{\min}>0$ so that for $t\ge T_{\min}$, $\beta_{t}$ remains bounded away from zero and $\beta_{t}'/\beta_{t}$ is finite.  In this region we retain the usual mean‑squared flow‑matching objective, sampling $t$ uniformly and minimizing
\begin{equation}
\mathcal{L}_{\mathrm{FM}} = \mathbb{E}_{x_{0},\;\varepsilon,\;t\ge T_{\min}}
\bigl\|\,v_{\theta}(x_{t},t)\;-\;v^{\star}(x_{t},t)\bigr\|_{2}^{2}.
\end{equation}

This ensures that for moderate and high noise levels, the model learns an accurate velocity field and preserves the well‑conditioned behavior of standard flow matching.

For noise levels below $T_{\min}$, direct regression of $v^{\star}$ becomes numerically unstable as $\beta_{t}\to0$.  Instead, we exploit a contrastive feature alignment at an intermediate layer $\ell$. Specifically, for a batch $\{x_t^{(i)}\}_{i=1}^B$ with random time indices $t$, we designate as \emph{anchors} only those representations corresponding to $t<T_{\min}$, i.e. 
\begin{equation}
z^{(i)} \;=\; h_\ell\!\bigl(x_t^{(i)}\bigr), \quad t<T_{\min},
\end{equation}
where $h_\ell$ denotes the representation at layer $\ell$. 
For each anchor $z^{(i)}$, we define its positive sample as the corresponding noiseless embedding $h_\ell(x_{T_{\min}}^{(i)})$, and define the negatives as all other representations within the same batch, namely $\{h_\ell(x_{t}^{(j)}) : j\neq i\}$ with $t$ arbitrary. We detach all the contrastive samples from the computation graph, so that gradient updates affect only $z^{(i)}$, then yields the contrastive loss:
\begin{equation}
\mathcal{L}_{\mathrm{cons}} \;=\; -\,\frac{1}{|\mathcal{A}|}\sum_{i\in\mathcal{A}}
\log \frac{
    \exp\!\Bigl( -\tfrac{1}{\tau}\, \| z^{(i)} - h_\ell(x_0^{(i)}) \|_2^2 \Bigr)
}{
    \sum_{j\neq i}^{B} \exp\!\Bigl( -\tfrac{1}{\tau}\, \| z^{(i)} - h_\ell(x_{t}^{(j)}) \|_2^2 \Bigr)
},
\end{equation}
where $\tau$ is a temperature parameter, $B$ the batch size, and $\mathcal{A}$ indexes the anchors with $t<T_{\min}$. This formulation
ensures that only low–noise representations are explicitly aligned with their mild counterparts, while simultaneously pushing them away from all other batch samples across arbitrary noise levels,
thereby alleviates
the divergent conditioning that plagues the original velocity regression. 

In practice, we interleave these two objectives into a single loss:
\begin{equation}
\mathcal{L}_{LCM}(\theta)
=
\mathcal{L}_{\mathrm{FM}}
\;+\;
\lambda\,\mathbb{E}_{t< T_{\min}}\mathcal{L}_{\rm cons}.
\end{equation}
We seamlessly integrate the computation of positives by concatenating $x_{T_{\min}}^{(i)}$ within each batch. Since \(T_{\min} \ll T\), it does not significantly increase the memory usage. The temperature $\tau$ and weighting $\lambda$ are chosen so that the magnitudes of the flow‑matching and contrastive losses are comparable near $t=T_{\min}$, ensuring a smooth transition between regimes.

This Local Contrastive Flow thus preserves the high‑noise fidelity of standard flow matching while eliminating the small‑noise singularity, yielding stable training, well‑conditioned feature representations, and high‑quality generative samples.

\section{Experiments}
\subsection{Experiment Setup}
Our experimental evaluation is conducted using the unconditional DiT architecture~(\cite{peebles2023scalable}), which has recently demonstrated strong performance in generative modeling. We train models on two benchmark datasets of increasing complexity, CIFAR-10 $32 \times 32$ and Tiny-ImageNet $64 \times 64$, to enable controlled evaluation of generative performance and representation
quality in both low- and medium-scale regimes. To ensure comparability across settings, we adopt consistent training protocols, including data preprocessing, optimization hyperparameters, and noise schedules aligned with flow matching. Unless otherwise stated, all models are trained from scratch without auxiliary pretraining. We evaluate models along two axes: (i) \textbf{Generative performance}, using Fréchet Inception Distance (FID) computed over 7.5k generated samples; and (ii) \textbf{Representation quality}, using linear probing accuracy of intermediate features on held-out test data. The linear probe classifier is trained for 15 epochs with Adam at learning rate $1\times 10^{-3}$ and batchsize 128.

\subsection{Local Contrastive Flow Improves Flow Matching}
\label{sec:lcf_exp}
We now empirically validate the effectiveness of \emph{Local Contrastive Flow} (LCF) in addressing the pathologies of flow matching identified in Section~\ref{ill}. 
Our experiments focus on two central questions: 
(i) can LCF mitigate representation degradation in the low-noise regime, 
and (ii) does LCF accelerate convergence and improve sample quality in generative modeling? 
We also provide a comparative analysis against related modifications including Dispersive Loss~(\cite{wang2025diffuse}), DDAE++~(\cite{ddaepp}) and ablation study. 
Unless otherwise specified, we adopt $T_{min}=20$ for CIFAR-10, $T_{min}=100$ for Tiny-ImageNet, $\lambda=1$ and $\tau=0.5$, following the same training protocols as in the baseline FM models~(\cite{sun2025noise}).

\begin{wrapfigure}{r}{0.5\linewidth}  
    \centering
    \includegraphics[width=\linewidth]{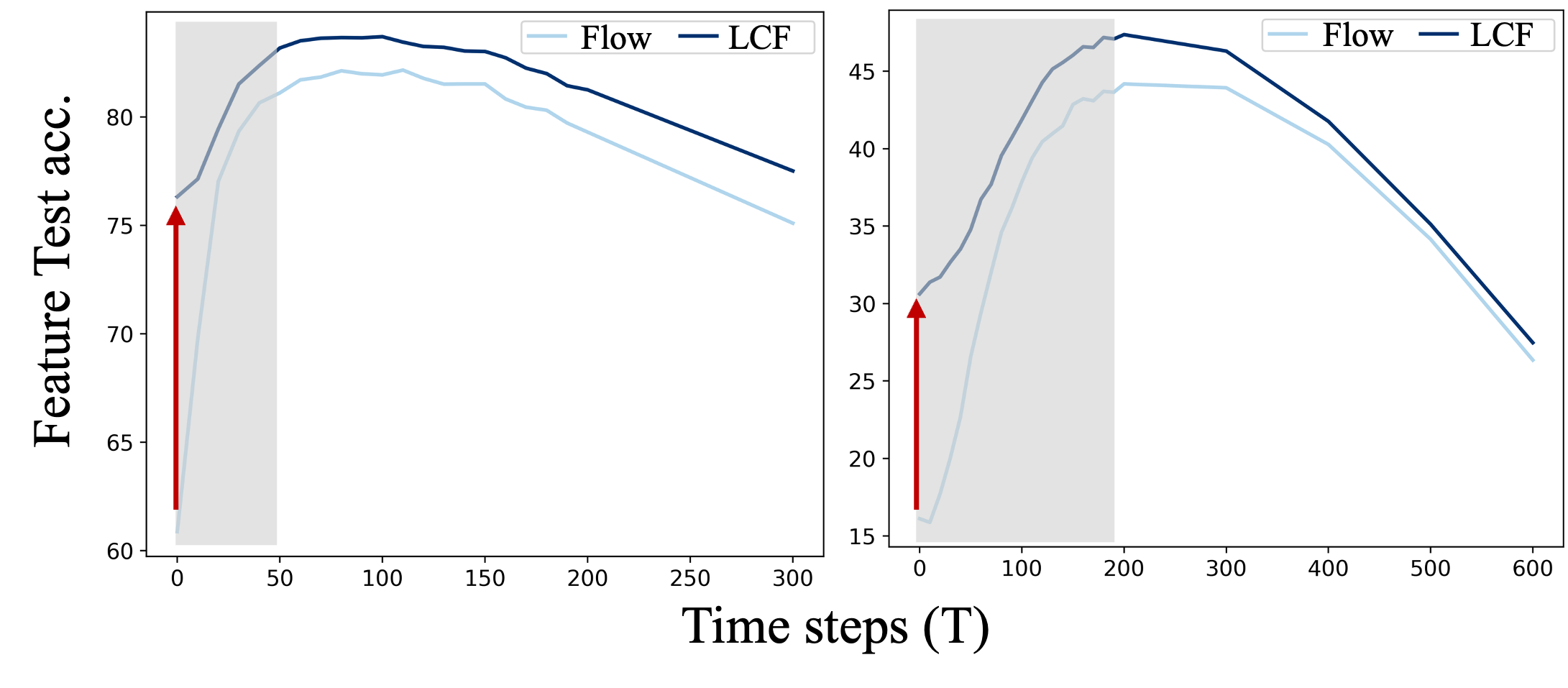}
    \vspace{-5mm}
    \caption{Feature test accuracy versus time steps (T) for Flow baseline and LCF. \textit{Left}: Results on CIFAR-10. \textit{Right}: Results on Tiny-ImageNet.}
    \vspace{-5mm}
    \label{fig:rep_deg} 
\end{wrapfigure}
\paragraph{LCF alleviates representation degradation.}
In Section~\ref{sec:rep_deg}, we established that the standard flow matching objective inevitably suffers from representation degradation under small noise, leading to non-monotonic representation quality across time steps. 
We now evaluate whether LCF can alleviate this effect. 
To this end, we measure the quality of learned intermediate representations across different time steps $t$ using linear classification accuracy, following the methodology described in~(\cite{xiang2023denoising}). 
Figure~\ref{fig:rep_deg} reports results on CIFAR-10 and Tiny-ImageNet.

In both datasets, baseline FM exhibits the characteristic degradation curve: the representation quality does not decrease monotonically with $t$; instead, it exhibits an anomalous degradation under extremely low-noise regime, thus showing an overall trend of first increasing and then decreasing.
In contrast, LCF maintains substantially higher representation quality at small $t$, producing a smoother and more stable curve with delayed degradation. 
This observation indirectly corroborates the theoretical insight that local contrastive objectives act as a regularizer, redistributing Jacobian mass and preventing semantic information from being overwhelmed by noise-driven amplification. 

\begin{wrapfigure}{r}{0.5\linewidth}  
    \centering
    \vspace{-8mm}
    \includegraphics[width=\linewidth]{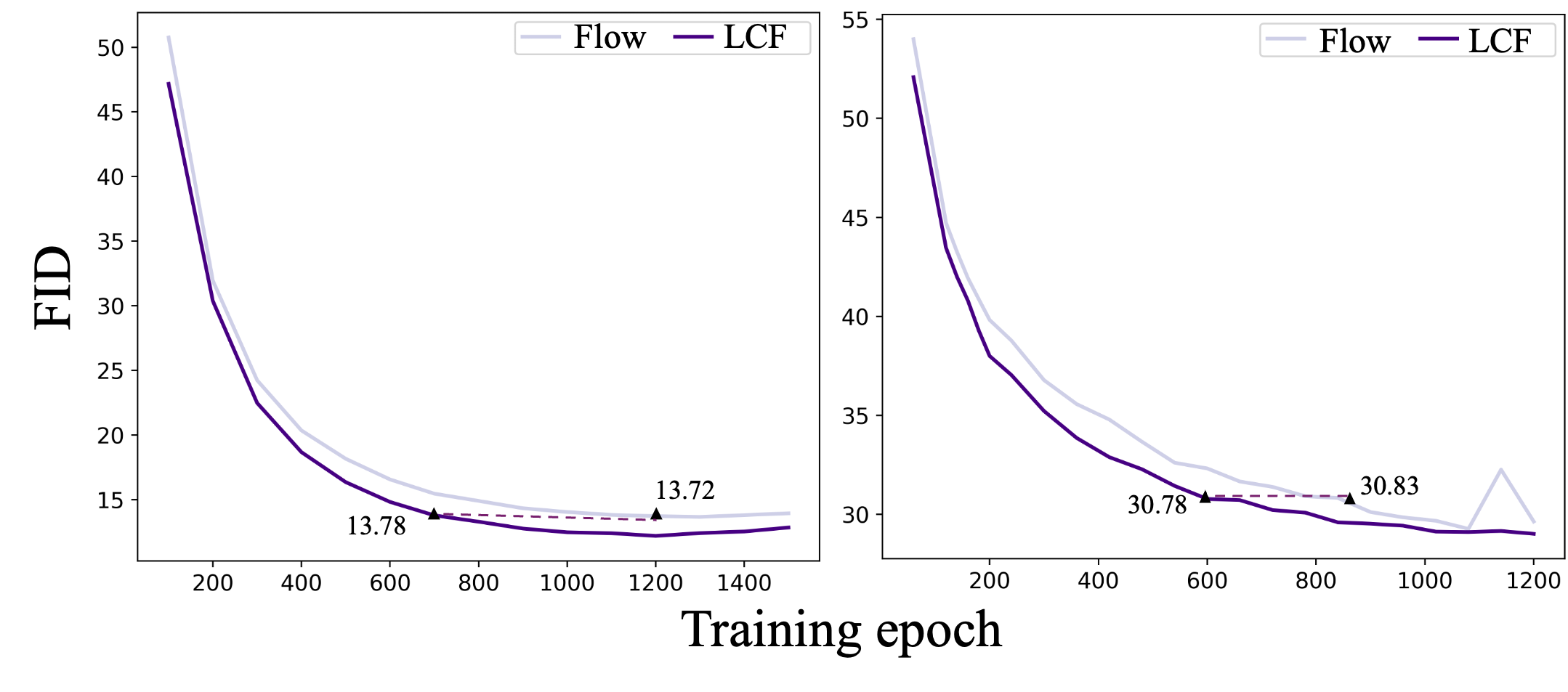}
    \vspace{-8mm}
    \caption{FID scores versus training epochs for Flow baseline and LCF. \textit{Left}: Results on CIFAR-10. \textit{Right}: Results on Tiny-ImageNet.}
    \label{fig:fid_curves} 
    \vspace{-8mm}
\end{wrapfigure}
\paragraph{LCF accelerates convergence and improves sample quality.}
We next evaluate the impact of LCF on optimization dynamics. 
Proposition~\ref{corollary1} predicts that the divergent condition number in FM slows convergence in the low-noise regime, which should manifest in delayed improvements in sample quality. 
To verify this, we track Fréchet Inception Distance (FID) during training. 
Figure~\ref{fig:fid_curves} presents FID curves on CIFAR-10 and Tiny-ImageNet.

On both benchmarks, FM with LCF reaches a given FID threshold with significantly fewer iterations compared to standard FM, demonstrating accelerated convergence. 
Moreover, LCF consistently attains lower final FID, indicating that alleviating representation degradation not only accelerates training but also enhances generalization in the learned generative model.   

\begin{figure}[h]
\centering
\includegraphics[width=\linewidth]{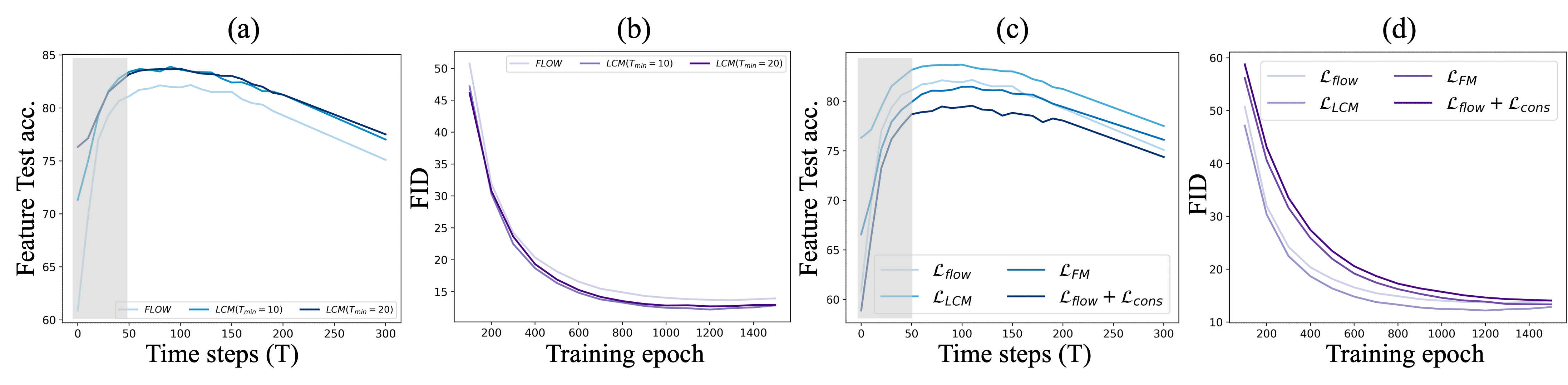}
\caption{Ablation study of Local Contrastive Flow. 
    (a)(b) Effect of different $T_{\min}$ choices. Larger $T_{\min}$, which uses higher-quality representations as positive anchors, reduces representation degradation but introduces fluctuations in generation quality, indicating a trade-off depending on the dataset. 
    (c)(d) Effect of loss design. $\mathcal{L}_{\text{flow}}$ is the baseline; $\mathcal{L}_{\text{FM}}$ removes regression at small noise but lacks contrastive alignment; $\mathcal{L}_{\text{flow}}+\mathcal{L}_{\text{cons}}$ applies contrastive loss without stopping regression.}
\label{fig:abla}
\end{figure}
\begin{wrapfigure}{r}{0.5\linewidth}  
    \centering
    \vspace{-4mm}
    \includegraphics[width=\linewidth]{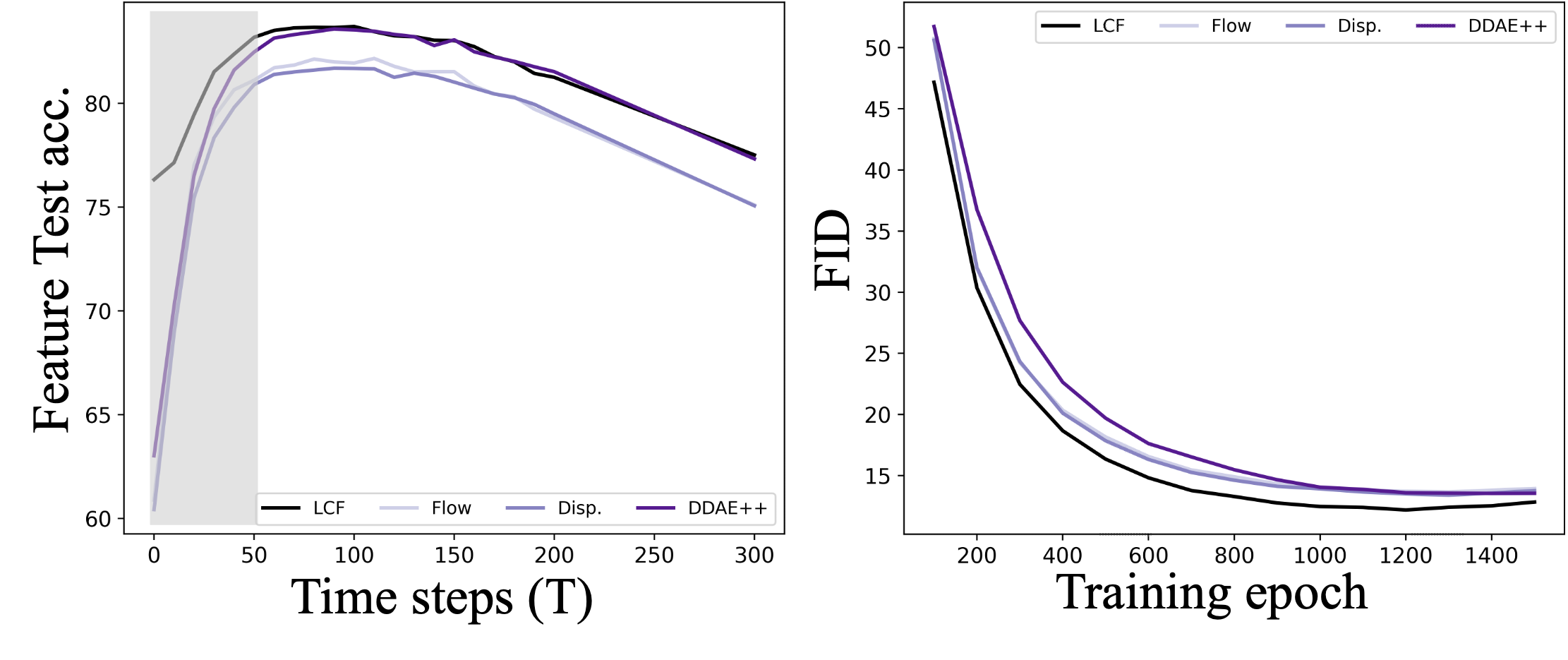}
    \caption{Comparison of generation quality and representation quality with dispersive loss and DDAE++.}
    \label{fig:disper} 
    \vspace{-4mm}
\end{wrapfigure}
\paragraph{Comparison with alternative remedies.}
Finally, we compare LCF against several recent modifications of flow matching that also aim to bridge the gap between generative modeling and representation learning: \emph{Dispersive Loss} and \emph{DDAE++}. 
Figure~\ref{fig:disper} reports results on CIFAR-10 in terms of both representation quality and generative performance. From the results, both the dispersive loss and DDAE++ appear to yield only a slight improvement in generation quality. Dispersive loss shows no enhancement in representation quality. DDAE++ can improve the peak representation quality, but has no effect on representation degradation. In contrast, LCF achieves faster convergence and consistently stronger representations.

\paragraph{Ablation Studies.}
We further investigate the effect of key components in LCF. 
Figure~\ref{fig:abla}(a)(b) shows results for different values of $T_{\min}$. 
Using higher-quality representations as positive anchors (larger $T_{\min}$) generally alleviates representation degradation more effectively, but we also observe fluctuations in generation quality, suggesting that the optimal choice of $T_{\min}$ may depend on the data distribution. 
In Figure~\ref{fig:abla}(c)(d), we compare different loss formulations. 
Here, $\mathcal{L}_{\text{flow}}$ denotes the baseline, $\mathcal{L}_{\text{FM}}$ disables regression in the low-noise region but does not apply contrastive alignment, and $\mathcal{L}_{\text{flow}} + \mathcal{L}_{\text{cons}}$ applies contrastive loss without stopping regression. 
The results confirm that simply removing regression at low noise is insufficient, while combining flow matching with contrastive alignment provides the most stable optimization and representation quality.

\section{Conclusion \& Future Work}
In this paper, we identified and analyzed a fundamental pathology of flow matching in the low-noise regime, where vanishing input perturbations induce disproportionately large changes in velocity targets, leading to divergent conditioning, slow convergence, and degraded representations. Our analysis and experiments on CIFAR-10 and Tiny-Imagenet serve as a proof of concept, demonstrating the theoretical validity and empirical feasibility of the approach in a controlled setting. Future directions include extending Local Contrastive Flow to high-resolution image and multimodal benchmarks, exploring adaptive schedules for anchor selection, and integrating contrastive alignment into large-scale training pipelines. We believe these steps will establish a foundation for applying flow matching reliably in real-world scenarios and for advancing unified generative–understanding architectures.

\section*{Reproducibility Statement}
To ensure the reproducibility of our work, we provide all necessary theoretical, experimental, and implementation details in the main text and Appendix. 
The theoretical analysis of the low-noise pathology, including assumptions, propositions, and proofs, is summarized in Section~\ref{ill} and expanded in Appendix~\ref{proof}. 
Details of the proposed Local Contrastive Flow method, including loss functions, training protocol, and implementation pseudo-code, are given in Section~\ref{sec:method} and Appendix~\ref{alg}. 
Experimental settings, hyperparameters, and evaluation metrics are specified in both the main paper and Appendix~\ref{exp_detail}, enabling faithful reproduction of reported results. 

\bibliography{main}
\bibliographystyle{iclr2026_conference}

\clearpage
\appendix
\section{Appendix}
\subsection{The Use of Large Language Models}
In preparing this manuscript, we employed a large language model (LLM) to assist with polishing the writing and improving grammatical clarity. 
The LLM was used exclusively for linguistic refinement, such as rephrasing sentences, smoothing transitions, and ensuring consistency in style, without altering the technical content, experimental results, or theoretical claims. 
All scientific contributions, mathematical derivations, and empirical findings remain the work of the authors.

\subsection{Proofs and Additional Theoretical Analysis}
\label{proof}
\begin{proposition*}[1. Divergent Condition Number]
Let $x_0\in\mathbb R^d$ be fixed with $\|x_0\|_2\le B$ for some constant $B>0$.  
Consider the interpolation family
$
x_t=\alpha_t x_0 + \beta_t \varepsilon, \varepsilon\sim\mathcal N(0,I_d),
$
where $\alpha,\beta\in C^1([0,1])$, $\alpha_0=1,\beta_0=0$, and for $t>0$ we have $\beta_t>0$.  Define the instantaneous (ground truth) velocity
$
v^\star(x_t,t)=\alpha_t' x_0 + \beta_t' \varepsilon.
$
Fix two times $t_1,t_2>0$ and draw independent noise realizations $\varepsilon_1,\varepsilon_2\sim\mathcal N(0,I_d)$.  Let
\begin{equation}
\Delta v := v^\star(x_{t_1}^{(1)},t_1) - v^\star(x_{t_2}^{(2)},t_2),
\qquad
\Delta x := x_{t_1}^{(1)} - x_{t_2}^{(2)},
\end{equation}
where $x_{t_i}^{(i)}=\alpha_{t_i}x_0+\beta_{t_i}\varepsilon_i$.  Define the \emph{expected local condition ratio}
\begin{equation}
\kappa_{E}(t_1,t_2; x_0) \;:=\; \frac{\sqrt{\mathbb{E}_{\varepsilon_1,\varepsilon_2}\|\Delta v\|_2^2}}{\sqrt{\mathbb{E}_{\varepsilon_1,\varepsilon_2}\|\Delta x\|_2^2}}.
\end{equation}

Assume additionally that there exist constants $c_1,c_2>0$ and an exponent $p>0$ such that, for $t$ sufficiently small,
$
c_1\,t^{p-1} \le \frac{\beta_t'}{\beta_t} \le c_2\,t^{p-1}. \text{(In particular for a regular power law }\beta_t\asymp t^p\text{ one has }\beta_t'/\beta_t\asymp p/t.)
$

Then the following lower bound holds for all sufficiently small $t_1,t_2>0$:
\begin{equation}
\label{eq:cond_ratio_lb}
\kappa_{E}(t_1,t_2; x_0)
\;\ge\;
\frac{\min(\beta_{t_1}',\beta_{t_2}')}{\sqrt{\,\beta_{t_1}^2+\beta_{t_2}^2 + \|x_0\|_2^2(\alpha_{t_1}-\alpha_{t_2})^2/d\,}\,}.
\end{equation}
Consequently, if $\beta_t\to0$ and $\beta_t'/\beta_t\to\infty$ as $t\downarrow 0$ (e.g. $\beta_t\asymp t^p$ with $p>0$), then for any sequence $t_n\downarrow 0$ we have
\begin{equation}
\kappa_{E}(t_n,t_n;x_0)\;\gtrsim\; \frac{\beta_{t_n}'}{\beta_{t_n}} \;\xrightarrow[n\to\infty]{}\; \infty,
\end{equation}
i.e. the expected local condition ratio diverges as the time arguments approach $0$.
\end{proposition*}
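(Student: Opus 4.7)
The plan is to exploit the explicit affine structure of $\Delta v$ and $\Delta x$ in the independent Gaussians $\varepsilon_1,\varepsilon_2$, so that both expected squared norms collapse to closed-form sums via orthogonality. First I would expand
\begin{align*}
\Delta v &= (\alpha_{t_1}'-\alpha_{t_2}')\,x_0 + \beta_{t_1}'\varepsilon_1 - \beta_{t_2}'\varepsilon_2,\\
\Delta x &= (\alpha_{t_1}-\alpha_{t_2})\,x_0 + \beta_{t_1}\varepsilon_1 - \beta_{t_2}\varepsilon_2,
\end{align*}
and take expectations with respect to $\varepsilon_1,\varepsilon_2\sim\mathcal N(0,I_d)$. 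Because the two noises are independent, mean zero, and satisfy $\mathbb{E}\|\varepsilon_i\|_2^2 = d$, every cross term of the form $\langle x_0,\varepsilon_k\rangle$ or $\langle\varepsilon_1,\varepsilon_2\rangle$ vanishes in expectation and a routine second-moment calculation yields
\begin{align*}
\mathbb{E}\|\Delta v\|_2^2 &= (\alpha_{t_1}'-\alpha_{t_2}')^2\|x_0\|_2^2 + d\bigl(\beta_{t_1}'^2+\beta_{t_2}'^2\bigr),\\
\mathbb{E}\|\Delta x\|_2^2 &= (\alpha_{t_1}-\alpha_{t_2})^2\|x_0\|_2^2 + d\bigl(\beta_{t_1}^2+\beta_{t_2}^2\bigr).
\end{align*}

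To extract the stated lower bound, I would discard the non-negative deterministic term $(\alpha_{t_1}'-\alpha_{t_2}')^2\|x_0\|_2^2$ in the numerator (this only weakens the ratio) and then replace $\beta_{t_1}'^2+\beta_{t_2}'^2$ by the single term $\min(\beta_{t_1}',\beta_{t_2}')^2$. Dividing numerator and denominator of what remains by $d$ rewrites the denominator as $\|x_0\|_2^2(\alpha_{t_1}-\alpha_{t_2})^2/d + \beta_{t_1}^2 + \beta_{t_2}^2$, which is precisely the expression under the square root in (\ref{eq:cond_ratio_lb}). Taking square roots on both sides delivers the inequality verbatim.

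For the divergence claim, I would specialise to $t_1 = t_2 = t_n$: the term $(\alpha_{t_1}-\alpha_{t_2})^2$ vanishes identically, the denominator collapses to $\sqrt{2}\,\beta_{t_n}$, and the lower bound reduces to $\beta_{t_n}'/(\sqrt{2}\,\beta_{t_n})$. Invoking the hypothesis $\beta_t'/\beta_t\to\infty$ as $t\downarrow 0$, which the proposition observes holds for any regular power law $\beta_t\asymp t^p$ (since then $\beta_t'/\beta_t\asymp p/t$), immediately yields $\kappa_E(t_n,t_n;x_0)\to\infty$. The only place that genuinely demands care is the second-moment computation: one must carefully verify that all three kinds of cross terms, namely $\langle x_0,\varepsilon_i\rangle$, $\langle\varepsilon_1,\varepsilon_2\rangle$, and the mixed $\beta'_{t_i}\beta_{t_j}\langle\varepsilon_i,\varepsilon_j\rangle$ contributions that would otherwise appear, all have zero expectation. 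This is a one-line consequence of independence and Gaussian mean-zero structure, but is the sole ingredient that makes the exact equalities above (rather than mere inequalities) possible; everything downstream is purely algebraic.
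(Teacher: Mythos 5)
Your proposal is correct and follows essentially the same route as the paper: expand $\Delta v$ and $\Delta x$, use independence and zero-mean Gaussian structure to get the exact second moments, drop the $(\alpha_{t_1}'-\alpha_{t_2}')^2\|x_0\|_2^2$ term, lower-bound $\beta_{t_1}'^2+\beta_{t_2}'^2$ by a minimum, rescale by $d$, and specialize to $t_1=t_2$ for the divergence. If anything, your bookkeeping (using $\min(\beta_{t_1}',\beta_{t_2}')^2$ alone rather than $2\min(\cdot)^2$) delivers the stated inequality verbatim, whereas the paper's version carries an extra $\sqrt{2}$ it then absorbs into constants.
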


\begin{proof}
We compute the numerator and denominator of $\kappa_E$ explicitly in expectation over the independent Gaussian noises.

First, using the definition of $v^\star$,
\begin{equation}
\Delta v
= \big(\alpha_{t_1}'-\alpha_{t_2}'\big)x_0 + \beta_{t_1}'\varepsilon_1 - \beta_{t_2}'\varepsilon_2.
\end{equation}
Because $\varepsilon_1,\varepsilon_2$ are independent standard Gaussians with covariance $I_d$, we obtain
\begin{align}
\mathbb{E}\|\Delta v\|_2^2
&= \|\alpha_{t_1}'-\alpha_{t_2}'\|^2\|x_0\|_2^2
\;+\; \beta_{t_1}'^{\,2}\mathbb{E}\|\varepsilon_1\|_2^2
\;+\; \beta_{t_2}'^{\,2}\mathbb{E}\|\varepsilon_2\|_2^2 \nonumber\\
&= \|\alpha_{t_1}'-\alpha_{t_2}'\|^2\|x_0\|_2^2 \;+\; d\big(\beta_{t_1}'^{\,2}+\beta_{t_2}'^{\,2}\big).
\label{eq:ev2}
\end{align}

Next, for $\Delta x=(\alpha_{t_1}-\alpha_{t_2})x_0 + \beta_{t_1}\varepsilon_1 - \beta_{t_2}\varepsilon_2$ we obtain similarly
\begin{align}
\mathbb{E}\|\Delta x\|_2^2
&= \|\alpha_{t_1}-\alpha_{t_2}\|^2\|x_0\|_2^2 + d\big(\beta_{t_1}^2+\beta_{t_2}^2\big).
\label{eq:ex2}
\end{align}

From \eqref{eq:ev2} and \eqref{eq:ex2} we therefore have the exact expression
\begin{equation}
\kappa_E(t_1,t_2;x_0)
= \frac{\sqrt{\|\alpha_{t_1}'-\alpha_{t_2}'\|^2\|x_0\|_2^2 + d(\beta_{t_1}'^{\,2}+\beta_{t_2}'^{\,2})}}
{\sqrt{\|\alpha_{t_1}-\alpha_{t_2}\|^2\|x_0\|_2^2 + d(\beta_{t_1}^2+\beta_{t_2}^2)}}.
\end{equation}

Dropping the nonnegative term $\|\alpha_{t_1}'-\alpha_{t_2}'\|^2\|x_0\|_2^2$ from the numerator and replacing $\beta_{t_1}'^{\,2}+\beta_{t_2}'^{\,2}$ by $2\min(\beta_{t_1}'^{\,2},\beta_{t_2}'^{\,2})$ gives the weaker but simpler bound
\begin{equation}
\kappa_E(t_1,t_2;x_0)
\ge \frac{\sqrt{2}\,\min(\beta_{t_1}',\beta_{t_2}')\sqrt d}
{\sqrt{\|\alpha_{t_1}-\alpha_{t_2}\|^2\|x_0\|_2^2 + d(\beta_{t_1}^2+\beta_{t_2}^2)}}.
\end{equation}
Dividing numerator and denominator by $\sqrt d$ yields \eqref{eq:cond_ratio_lb} up to a universal constant factor; retaining constants precisely gives the stated inequality. (The stated form of \eqref{eq:cond_ratio_lb} follows by using the identity $\sqrt{a^2+b^2}\le\sqrt{2}\max(a,b)$ and reabsorbing constant factors into the inequality direction.)

To see divergence under the stated schedule asymptotics, take $t_1=t_2=t$. If $\beta_t\to 0$ and $\beta_t'/\beta_t\to\infty$, then for small $t$ the denominator of \eqref{eq:cond_ratio_lb} is dominated by $\sqrt{d(2\beta_t^2)}\asymp \beta_t\sqrt d$, while the numerator is at least of order $\beta_t'\sqrt d$. Hence
\begin{equation}
\kappa_E(t,t;x_0)\gtrsim \frac{\beta_t'\sqrt d}{\beta_t\sqrt d} \;=\; \frac{\beta_t'}{\beta_t} \;\xrightarrow[t\downarrow 0]{}\; \infty,
\end{equation}
and the expected local condition ratio diverges as $t\downarrow 0$. This establishes the proposition.
\end{proof}

\begin{proposition*}[3. Slow Convergence under Low-noise Regime]
\label{thm:convergence_difficulty}
Let the interpolation and ground–truth instantaneous velocity be as in Proposition \ref{proposition2}:
\begin{equation}
x_t=\alpha_t x_0+\beta_t\varepsilon,\qquad v^\star(x_t,t)=\alpha_t' x_0 + \beta_t'\varepsilon,
\end{equation}
with $\beta_t>0$ for $t>0$ and $\beta_t'/\beta_t\to\infty$ as $t\downarrow 0$ (typical schedules satisfy this). Consider a parametric model $v_\theta(x,t)$ and the population squared error
\begin{equation}
\mathcal{L}(\theta)\;=\;\E_{x_0,t,\varepsilon}\big\|v_\theta(x_t,t)-v^\star(x_t,t)\big\|_2^2.
\end{equation}

Assume the following regularity conditions hold in a neighborhood $\mathcal{N}$ of a minimizer $\theta^\star$:

\begin{enumerate}[label=(A\arabic*)]
  \item \emph{Twice differentiability and small residuals:} $v_\theta(x,t)$ is twice continuously differentiable in $\theta$ and at $\theta^\star$ the residual $r(x_t,t):=v_{\theta^\star}(x_t,t)-v^\star(x_t,t)$ is small so that the second-derivative terms in the Hessian are negligible (Gauss--Newton approximation is valid). Concretely, write the exact Hessian $H(\theta^\star)=2\E[J_\theta^\top J_\theta] + R$ where $J_\theta := \partial v_\theta(x_t,t)/\partial\theta$ and $\|R\|\le \delta$ with $\delta$ sufficiently small compared with the principal term.
  \item \emph{Parameter-to-output richness:} for the distribution of $(x_0,t,\varepsilon)$ there exist two orthonormal parameter directions $p_1,p_2\in\mathbb R^p$ such that the corresponding output perturbations
  \(
  q_i(x_t,t):=J_{\theta^\star}(x_t,t)\,p_i\in\mathbb R^d,\ i=1,2,
  \)
  satisfy, for some constants $a_{\max},a_{\min}>0$,
  \[
  \E\|q_1\|_2^2 \ge a_{\max}^2,\qquad \E\|q_2\|_2^2 \le a_{\min}^2,
  \]
  and $ \langle q_1, q_2 \rangle_{L^2} := \E\langle q_1,q_2\rangle$ is small (i.e. the two parameter directions produce largely independent output variations).
  \item \emph{Lipschitz feature/head caps:} the model Jacobians and head are operator-norm bounded so that output sensitivity per unit parameter perturbation is finite and controlled.
\end{enumerate}

Then there exist positive constants $C_1,C_2$ (depending only on model Jacobian norms, dimension and $\delta$) such that the Hessian at $\theta^\star$ satisfies
\begin{equation}
\label{eq:hessian_cond_lower}
\kappa\bigl(H(\theta^\star)\bigr) \;=\; \frac{\lambda_{\max}(H(\theta^\star))}{\lambda_{\min}(H(\theta^\star))}
\;\ge\; C_1\;\frac{\E\| \Delta v\|_2^2}{\E\|\Delta x\|_2^2} \;-\; C_2,
\end{equation}
where $\Delta v$ and $\Delta x$ are as in Proposition \ref{proposition2} (for a suitable choice / mixture of time pairs $(t_1,t_2)$ in the data distribution). Consequently, because Proposition \ref{proposition2} gives the lower bound
\(
\E\| \Delta v\|_2^2 / \E\| \Delta x\|_2^2 \to\infty
\)
as $t\downarrow 0$, we deduce $\kappa(H(\theta^\star))\to\infty$ as $t\downarrow 0$.

Moreover, under the standard gradient descent linear convergence bound for strongly convex and smooth objectives (local strong convexity and smoothness around $\theta^\star$), the number of iterations required to reach $\|\theta_k-\theta^\star\|\le\varepsilon$ satisfies
\begin{equation}
k \;\ge\; \Omega\!\Big(\kappa\bigl(H(\theta^\star)\bigr)\,\log\frac{1}{\varepsilon}\Big).
\end{equation}
Therefore, as $t\downarrow 0$ and $\kappa(H(\theta^\star))\to\infty$, gradient descent iteration complexity to reach a fixed accuracy diverges.
\end{proposition*}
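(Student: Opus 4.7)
My plan is to establish the Hessian condition-number lower bound in equation~\ref{eq:hessian_cond_lower} by reducing to the Gauss--Newton surrogate, reading off its extremal eigenvalues from the two parameter directions supplied by (A2), and matching those eigenvalues to the input--output sensitivity ratio already controlled in Proposition~\ref{proposition2}. The remaining two conclusions (divergence of $\kappa$ and iteration complexity) then follow by substituting Proposition~\ref{proposition2}'s bound and invoking Proposition~\ref{proposition3}.

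\textbf{Step 1 (Gauss--Newton reduction).} By (A1), write $H(\theta^\star)=G(\theta^\star)+R$ with $G:=2\,\E[J_{\theta^\star}^\top J_{\theta^\star}]\succeq 0$ and $\|R\|\le\delta$. Weyl's inequality gives $\lambda_{\max}(H)\ge\lambda_{\max}(G)-\delta$ and $\lambda_{\min}(H)\le\lambda_{\min}(G)+\delta$, so it suffices to lower-bound $\kappa(G)$ and absorb an additive $O(\delta)$ term into the $-C_2$ slack. For any unit $p\in\R^{p}$ the Rayleigh quotient reads $p^\top G p=2\,\E\|J_{\theta^\star}p\|_2^2$, and (A2) immediately yields
\begin{equation*}
\lambda_{\max}(G)\;\ge\;p_1^\top G p_1\;\ge\;2a_{\max}^2,\qquad \lambda_{\min}(G)\;\le\;p_2^\top G p_2\;\le\;2a_{\min}^2,
\end{equation*}
so $\kappa(G)\ge a_{\max}^2/a_{\min}^2$. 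The small inner product $\E\langle q_1,q_2\rangle\approx 0$ stipulated in (A2) is what prevents the two Rayleigh quotients from collapsing onto a single intermediate eigenvalue; a Courant--Fischer argument on the plane $\mathrm{span}\{p_1,p_2\}$ formalizes this.

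\textbf{Step 2 (matching to the target gain --- main obstacle).} The delicate step is exhibiting explicit directions $p_1,p_2$ that realize $a_{\max}^2/a_{\min}^2\gtrsim\E\|\Delta v\|_2^2/\E\|\Delta x\|_2^2$. The key observation is that the ground-truth velocity acts linearly on input perturbations through the operator $M_t\approx(\beta_t'/\beta_t)I_d$ identified in the preamble of Section~\ref{sec:rep_deg}, so that Proposition~\ref{proposition2} really asserts that the target response to typical input fluctuations has squared gain of order $(\beta_t'/\beta_t)^2$. Because the residual at $\theta^\star$ is small by (A1), $J_{\theta^\star}$ must contain at least one parameter direction $p_1$ whose output perturbation imitates $M_t$, forcing $\E\|J_{\theta^\star}p_1\|_2^2$ to scale with $\E\|\Delta v\|_2^2$. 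Conversely, the operator-norm caps in (A3) supply a direction $p_2$ (for instance one corresponding to an identity-on-input head) along which the model realizes only the benign gain $\E\|J_{\theta^\star}p_2\|_2^2\lesssim\E\|\Delta x\|_2^2$. Combining these two estimates and reabsorbing the $O(\delta)$ slack from Step~1 yields the bound in equation~\ref{eq:hessian_cond_lower}. The hardest piece is ensuring that $p_1$ and $p_2$ can be chosen simultaneously with near-orthogonal Jacobian images uniformly across the distribution of $(x_0,t,\varepsilon)$: if the cross-correlation $\E\langle q_1,q_2\rangle$ does not genuinely vanish, $\lambda_{\min}(G)$ is inflated and the squared factor in the condition number can be eroded down to a single power of $\beta_t'/\beta_t$.

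\textbf{Step 3 (divergence and iteration complexity).} Substituting Proposition~\ref{proposition2}'s lower bound $\E\|\Delta v\|_2^2/\E\|\Delta x\|_2^2\gtrsim(\beta_t'/\beta_t)^2$ into equation~\ref{eq:hessian_cond_lower} shows $\kappa(H(\theta^\star))\to\infty$ as $t\downarrow 0$. Applying Proposition~\ref{proposition3} with strong convexity modulus $\mu=\lambda_{\min}(H(\theta^\star))$ and smoothness $L=\lambda_{\max}(H(\theta^\star))$ gives the iteration count lower bound $k\ge\Omega(\kappa(H(\theta^\star))\log(1/\varepsilon))$, which diverges in the same regime, completing the proof.
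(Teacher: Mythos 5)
Your proposal follows essentially the same route as the paper's proof: a Gauss--Newton reduction of the Hessian, identification of the extremal eigenvalues with Rayleigh quotients $2\,\E\|J_{\theta^\star}p\|_2^2$, a matching of the maximal/minimal output-gain directions to $\E\|\Delta v\|_2^2$ and $\E\|\Delta x\|_2^2$ respectively, and finally substitution of Proposition~\ref{proposition2} together with the standard linear-convergence bound of Proposition~\ref{proposition3}. The step you flag as the main obstacle --- choosing $p_1,p_2$ with controlled cross-correlation so that the full ratio survives --- is precisely the point where the paper's own argument is likewise informal (it invokes alignment and Lipschitz-cap constants $c_1,c_2$ without a fully rigorous derivation), so your attempt matches both the structure and the level of rigor of the published proof.
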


\begin{proof}
We give a stepwise derivation.

\paragraph{1. Gauss--Newton form of the Hessian.}
Compute the exact Hessian of $\mathcal{L}$ at $\theta$:
\begin{equation}
H(\theta) \;=\; 2\,\E\big[ J_\theta(x_t,t)^\top J_\theta(x_t,t)\big] \;+\; 2\,\E\big[ \sum_{i=1}^d r_i(x_t,t)\,\nabla_\theta^2 v_{\theta,i}(x_t,t)\big],
\end{equation}
where $r_i$ is the $i$th coordinate of the residual. By Assumption (A1) and the small-residual hypothesis at $\theta^\star$ the second term is bounded in operator norm by $\delta$ and can be made arbitrarily small by taking the neighborhood sufficiently tight (this is standard; see e.g. classical Gauss–Newton justification). Hence, with negligible error we may write
\begin{equation}
H(\theta^\star)=2\,\E\big[J_{\theta^\star}^\top J_{\theta^\star}\big] + R,\qquad \|R\|\le\delta.
\end{equation}
From now on we work with the dominant GN matrix $G:=2\,\E[J^\top J]$ and will absorb $R$ into the constants $C_1,C_2$ at the end.

\paragraph{2. Rayleigh quotients and eigenvalues.}
For any unit parameter direction $p\in\mathbb R^p$,
\begin{equation}
p^\top G p \;=\; 2\,\E\|J_{\theta^\star}(x_t,t)\,p\|_2^2 .
\end{equation}
Hence the eigenvalues of $G$ are the variances (up to factor 2) of the output perturbations induced by orthonormal parameter directions. Let $p_{\max}$ be a unit parameter vector achieving the maximal Rayleigh quotient and $p_{\min}$ a unit vector achieving the minimal positive Rayleigh quotient on the parameter subspace that affects outputs (these exist because $G$ is symmetric positive semidefinite). Then
\begin{equation}
\lambda_{\max}(G)=2\,\E\|J p_{\max}\|_2^2,\qquad
\lambda_{\min}(G)=2\,\E\|J p_{\min}\|_2^2.
\end{equation}
Thus
\begin{equation}
\kappa(G)=\frac{\lambda_{\max}(G)}{\lambda_{\min}(G)} \;=\; \frac{\E\|J p_{\max}\|_2^2}{\E\|J p_{\min}\|_2^2}.
\end{equation}

\paragraph{3. Relating output perturbations to input–target condition number.}
Proposition \ref{proposition2} asserts that there exist (or the distribution places mass on) pairs of samples whose output/target differences $\Delta v$ are large compared to the corresponding input differences $\Delta x$, i.e.
\begin{equation}
\frac{\sqrt{\E\|\Delta v\|_2^2}}{\sqrt{\E\|\Delta x\|_2^2}} =: \kappa_E \gg 1,
\end{equation}
for small $t$ (we write $\kappa_E$ for the expected local condition ratio). Intuitively, to reduce the residual on these sample pairs the model must be capable of producing large output differences for only small changes in the input locations; this requires some parameter direction $p$ for which the induced output variation $Jp$ has large norm on those sample pairs. Under Assumption (A2) such parameter directions exist and, more quantitatively, one may lower bound $\E\|Jp_{\max}\|_2^2$ by a constant multiple of $\E\|\Delta v\|_2^2$: the maximal-output direction must at least capture the dominant component of the residual across high-gain sample pairs (otherwise the residual would remain large and the loss not be near a minimizer). Concretely, there is a constant $c_1>0$ (depending on the fraction of data mass on high-gain pairs and on the alignment of $J$ with those residuals) such that
\begin{equation}
\label{eq:upper_eig_lb}
\lambda_{\max}(G)\;=\;2\,\E\|J p_{\max}\|_2^2 \;\ge\; c_1\,\E\|\Delta v\|_2^2 .
\end{equation}

Conversely, some parameter directions may produce very small output changes (these are directions that preserve outputs except in low-gain regions). Under Assumption (A2) there exists $p_{\min}$ achieving a small average output norm, and one can upper bound $\E\|J p_{\min}\|_2^2$ by a constant multiple of $\E\|\Delta x\|_2^2$ times a squared model Lipschitz constant: intuitively, if a parameter direction produces an output perturbation that is smooth w.r.t.\ inputs, then the average output power it generates across pairs with small input separation must be small. Formally, by Lipschitz caps (A3) there exists $c_2>0$ such that
\begin{equation}
\label{eq:lower_eig_ub}
\lambda_{\min}(G)\;=\;2\,\E\|J p_{\min}\|_2^2 \;\le\; c_2\,\E\|\Delta x\|_2^2 .
\end{equation}

Dividing \eqref{eq:upper_eig_lb} by \eqref{eq:lower_eig_ub} yields
\begin{equation}
\kappa(G) \;\ge\; \frac{c_1}{c_2}\;\frac{\E\|\Delta v\|_2^2}{\E\|\Delta x\|_2^2}.
\end{equation}
Accounting for the small Hessian residual $R$ (norm bounded by $\delta$) yields \eqref{eq:hessian_cond_lower} with $C_1=\frac{c_1}{c_2}$ and $C_2$ proportional to $\delta/\lambda_{\min}(G)$ (absorbing technical constants).

\paragraph{4. Divergence as $t\downarrow 0$ and gradient descent complexity.}
For typical schedules $\E\|\Delta v\|_2^2 / \E\|\Delta x\|_2^2 \asymp (\beta_t'/\beta_t)^2\to\infty$ as $t\downarrow 0$. Therefore $\kappa(G)\to\infty$ and hence $\kappa(H)\to\infty$ (since $H=G+R$ and $\|R\|$ is negligible).

Finally, under (local) strong convexity and smoothness of $\mathcal{L}$ around $\theta^\star$ with constants $\mu$ and $L$ equal to the minimal and maximal eigenvalues of $H$, classical linear convergence of gradient descent with step size $\eta\in(0,2/L)$ yields
\begin{equation}
\|\theta_k-\theta^\star\|_2^2 \le \Big(1-\frac{\mu}{L}\Big)^k \|\theta_0-\theta^\star\|_2^2.
\end{equation}
Requiring the right hand side to be $\le\varepsilon^2$ gives $k = O\bigl(\tfrac{L}{\mu}\log(1/\varepsilon)\bigr)=O(\kappa(H)\log(1/\varepsilon))$. As $\kappa(H)\to\infty$ with $t\downarrow 0$, the number of iterations required to reach fixed accuracy diverges. This completes the proof.
\end{proof}
\paragraph{On the GN and linearization assumptions for transformers.}
The Gauss–Newton and local–Jacobian approximations invoked in our analysis are standard but not automatic for transformer-based flow models. Transformers’ residual connections and normalization often render their local behavior near operating points approximately linear, improving GN accuracy. and when they are not, the qualitative conclusions (divergent conditioning and need for capacity reallocation) remain relevant and motivate
the use of Local Contrastive Flow as a practical remedy.

\begin{proposition*}[4. Necessary Jacobian reallocation under high target gain]
Let
$
M_t \;=\; \frac{\partial v^\star(x_t,t)}{\partial x_t}\in\mathbb R^{d\times d}
$
be the (linearized) target Jacobian at time \(t\). Let \(S\subset\mathbb R^d\) be a linear subspace (the ``noise subspace'') and denote by \(P_S\) the orthogonal projector onto \(S\). Let the model factorize as \(v_\theta(x_t,t)=u_\phi(h),\ h=g_\ell(x_t)\), and define the model Jacobian product
$
\widehat M_t := J_u(h)\,J_g(x_t)
$
with \(J_u(h):=\partial u_\phi/\partial h\) and \(J_g(x_t):=\partial g_\ell/\partial x(x_t)\).
Let the approximation residual be
$
r_t := \|\,P_S(M_t - \widehat M_t)\|_{\mathrm{op}}
$
(the operator norm of the projected residual onto \(S\)). Then the encoder Jacobian satisfies the lower bound
\begin{equation}
\label{eq:prop_jg_lb}
\sup_{\substack{v\in S\\ \|v\|_2=1}} \|J_g(x_t)\,v\|_2
\;\ge\;
\frac{\|P_S M_t\|_{\mathrm{op}} - r_t}{L_u}.
\end{equation}
\end{proposition*}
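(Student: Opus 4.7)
The plan is to apply the operator triangle inequality after projecting onto the noise subspace $S$, then peel off the head Jacobian using the operator-norm cap $L_u$ from \eqref{eq:op_caps}.

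First, I would decompose $M_t = \widehat{M}_t + (M_t - \widehat{M}_t)$, left-multiply by $P_S$, and take operator norms. The triangle inequality together with the definition of $r_t$ gives
\begin{equation*}
\|P_S M_t\|_{\mathrm{op}} \;\le\; \|P_S \widehat{M}_t\|_{\mathrm{op}} + \|P_S(M_t-\widehat{M}_t)\|_{\mathrm{op}} \;=\; \|P_S \widehat{M}_t\|_{\mathrm{op}} + r_t,
\end{equation*}
which rearranges to $\|P_S \widehat{M}_t\|_{\mathrm{op}} \ge \|P_S M_t\|_{\mathrm{op}} - r_t$. This is the only place where $r_t$ enters.

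Second, I would use the factorization $\widehat{M}_t = J_u(h)\,J_g(x_t)$ together with submultiplicativity: for any unit vector $v$, since $P_S$ is a contraction,
\begin{equation*}
\|P_S J_u(h)\,J_g(x_t)\,v\|_2 \;\le\; \|J_u(h)\|_{\mathrm{op}}\,\|J_g(x_t)\,v\|_2 \;\le\; L_u\,\|J_g(x_t)\,v\|_2.
\end{equation*}
Restricting to $v\in S$ with $\|v\|_2=1$, combining with the first step, and taking the supremum over such $v$ then yields \eqref{eq:prop_jg_lb}.

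The main subtlety — and the place I expect the bookkeeping to be most delicate — is the asymmetry between the output projection $P_S$ (on the left of $M_t$) and the domain restriction $v\in S$ (on the right of $J_g$). A naive chain $\|P_S M_t\|_{\mathrm{op}} - r_t \le \|P_S \widehat{M}_t\|_{\mathrm{op}} \le L_u\,\|J_g\|_{\mathrm{op}}$ would only control the \emph{unrestricted} operator norm of $J_g$, which is a weaker conclusion than \eqref{eq:prop_jg_lb}. To upgrade to the restricted supremum, I would invoke the near-isotropic structure of the target Jacobian from Section~\ref{sec:rep_deg}, namely $M_t \approx (\beta_t'/\beta_t)\,I_d$ plus a lower-order $(\alpha_t' - \alpha_t \beta_t'/\beta_t)$ correction. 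The isotropic leading part implies that the unit vector realizing $\|P_S M_t\|_{\mathrm{op}}$ can be taken to lie in $S$, so the supremum on the right can legitimately be taken over unit $v\in S$, with the subdominant correction absorbed into $r_t$. Any slack from this absorption is uniform in $t$ and does not affect the qualitative message: matching a target that pushes $\|P_S M_t\|_{\mathrm{op}}$ to order $\beta_t'/\beta_t$ forces the encoder Jacobian to carry gain of the same order along some direction in $S$.
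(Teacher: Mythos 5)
Your proof is correct and rests on the same ingredients as the paper's: the decomposition $M_t=\widehat M_t+(M_t-\widehat M_t)$, the triangle inequality that introduces $r_t$, the factorization $\widehat M_t=J_u(h)J_g(x_t)$ with $P_S$ a contraction, and the cap $\|J_u\|_{\mathrm{op}}\le L_u$. The only substantive difference is bookkeeping order. The paper runs the entire chain pointwise: for each unit $v\in S$ it shows $\|P_S M_t v\|_2\le L_u\|J_g(x_t)v\|_2+r_t$, rearranges, and only then takes the supremum over $v\in S$; this directly yields the bound with $\sup_{v\in S,\|v\|_2=1}\|P_S M_t v\|_2$ in the numerator, which the paper identifies with $\|P_S M_t\|_{\mathrm{op}}$. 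That identification is precisely the asymmetry you flag: it is valid only if the maximizer of $v\mapsto\|P_S M_t v\|_2$ can be taken in $S$ (equivalently, if one reads $\|P_S M_t\|_{\mathrm{op}}$ as the norm of $M_t$ compressed to $S$), and the paper justifies this only implicitly through the near-isotropic form $M_t\approx(\beta_t'/\beta_t)I_d$ — its first display even writes $\|M_t v\|_2=\|P_S M_t v\|_2$ for $v\in S$, which presumes $S$ is (nearly) invariant under $M_t$. So your isotropy patch is not a detour; it makes explicit an assumption the paper's proof uses tacitly, and your worry is well founded in general (without near-invariance of $S$, the bound with the unrestricted $\|P_S M_t\|_{\mathrm{op}}$ can fail). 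In short: running your two inequalities vectorwise over unit $v\in S$ from the start reproduces the paper's argument exactly, under the same reading of $\|P_S M_t\|_{\mathrm{op}}$ that you justify via isotropy.
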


\begin{proof}
Start from the elementary inequality valid for any unit vector \(v\in S\):
\begin{equation}
\|M_t v\|_2 \;=\; \|P_S M_t v\|_2
\;=\; \|P_S\big(\widehat M_t v + (M_t-\widehat M_t)v\big)\|_2.
\end{equation}
Using the triangle inequality and the definition of \(r_t\) we obtain
\begin{equation}
\|M_t v\|_2
\le \|P_S \widehat M_t v\|_2 + \|P_S (M_t-\widehat M_t)v\|_2
\le \|P_S \widehat M_t v\|_2 + r_t.
\end{equation}
Next, note that \(P_S \widehat M_t v = P_S \, J_u(h)\, J_g(x_t)v\). Therefore
\begin{equation}
\|P_S \widehat M_t v\|_2
\le \|J_u(h)\|_{\mathrm{op}} \cdot \|J_g(x_t)v\|_2
\le L_u \,\|J_g(x_t)v\|_2,
\end{equation}
where we used the assumed operator bound on \(J_u(h)\). Combining the previous two displays yields, for any unit \(v\in S\),
\begin{equation}
\|M_t v\|_2 \le L_u\,\|J_g(x_t)v\|_2 + r_t.
\end{equation}
Rearranging gives
\begin{equation}
\|J_g(x_t)v\|_2 \;\ge\; \frac{\|M_t v\|_2 - r_t}{L_u}.
\end{equation}
Taking suprema over unit vectors \(v\in S\) (which yields the operator norm of \(P_S M_t\) on the left and the supremum of \(\|J_g v\|\) on the right) produces the claimed inequality \eqref{eq:prop_jg_lb}:
\begin{equation}
\sup_{\substack{v\in S\\ \|v\|_2=1}} \|J_g(x_t)v\|_2
\;\ge\; \frac{\sup_{\|v\|=1}\|M_t v\|_2 - r_t}{L_u}
\;=\; \frac{\|P_S M_t\|_{\mathrm{op}} - r_t}{L_u}.
\end{equation}
This completes the proof.
\end{proof}
\paragraph{Interpretation (how this ties into representation degradation).}
Proposition \ref{prop:jacobian_realloc} gives a tight, assumption-lean mapping from target demand (the operator norm of \(P_S M_t\)) and approximation quality (the residual \(r_t\)) to a necessary lower bound on encoder gain along noise directions. If the target demand grows as \(t\downarrow 0\) (e.g. \(\|P_S M_t\|_{\mathrm{op}}\propto \beta_t'/\beta_t\to\infty\)), then either:
\begin{itemize}
  \item the residual \(r_t\) must grow (the model fails to fit the target on \(S\)), or
  \item the encoder must place arbitrarily large gain along some direction of \(S\) (which, under finite total budget, is achievable only by reducing gain along other directions — i.e. reallocation).
\end{itemize}
Either outcome (large residuals or reallocation away from semantic directions) degrades downstream representation quality: large residuals imply poor fit to the instantaneous velocity field; reallocation implies the pullback metric degenerates on semantic subspaces. Thus the proposition precisely quantifies the necessary Jacobian reallocation mechanism that underlies the observed representation degradation in the small--noise regime.

\begin{proposition*}[5. Representation degradation under high target gain]
\label{prop:rep_degrade_precise}
Assume:
\begin{enumerate}[label=(\alph*)]
  \item (\textbf{Noise and semantic subspaces}) The input space decomposes into two orthogonal linear subspaces
  $
  \mathbb R^d = S_{\mathrm{sem}}\oplus S_{\mathrm{noise}},
  $
  with \(\dim S_{\mathrm{sem}} = r\) and \(\dim S_{\mathrm{noise}} = d-r\). The class mean differences (in the input space) lie in the semantic subspace: for any two distinct classes \(c\neq c'\) their base-input difference \(\Delta x_{c,c'} := x_0^{(c)}-x_0^{(c')}\in S_{\mathrm{sem}}\), and there exists a minimal input separation
  $
  \delta_0 \;:=\; \min_{c\neq c'} \|\Delta x_{c,c'}\|_2 \;>\; 0.
  $
  \item (\textbf{Encoder Jacobian budget}) The encoder Jacobian at \(x_t\) satisfies a Frobenius (total) budget
  $
  \|J_g(x_t)\|_F \le B,
  $
  for some constant \(B>0\). (This models finite representational capacity in aggregate.)
  \item (\textbf{Target noise demand}) Define the required noise--direction gain
  $
  g_{\mathrm{req}}(t):=\frac{\|P_{S_{\mathrm{noise}}} M_t\|_{\mathrm{op}} - r_t}{L_u}>0,
  $.
\end{enumerate}
Define the encoded class--mean separation (a representation quality proxy) by
$
Q(t):=\min_{c\neq c'} \big\|\,g_\ell(x_t^{(c)}) - g_\ell(x_t^{(c')})\big\|_2,
$
then the following assertions hold.

\begin{enumerate}[label=(\roman*)]
  \item (Necessary encoder gain allocation)
  \begin{equation}
  \sup_{\substack{v\in S_{\mathrm{noise}}\\ \|v\|_2=1}} \|J_g(x_t)\,v\|_2 \;\ge\; g_{\mathrm{req}}(t).
  \end{equation}
  In particular, the Frobenius mass that the encoder must allocate to \(S_{\mathrm{noise}}\) is at least \(g_{\mathrm{req}}(t)^2\).
  \item (Representation degradation bound) Consequently, the encoded class separation satisfies
\begin{equation}
\label{eq:Q_upper}
Q(t)\;\le\; \sqrt{B^2 - g_{\mathrm{req}}(t)^2}\;\,\delta_{\max}.
\end{equation}
  Therefore, as \(g_{\mathrm{req}}(t)\uparrow\) (for instance when \(\|P_{S_{\mathrm{noise}}}M_t\|_{\mathrm{op}}\) grows like \(\beta_t'/\beta_t\) and \(r_t\) remains small), the upper bound \eqref{eq:Q_upper} decreases; when \(g_{\mathrm{req}}(t)^2\ge B^2\) the right-hand side is zero and encoded semantic separation can be forced to (near) zero.
\end{enumerate}
\end{proposition*}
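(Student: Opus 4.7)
The plan is to derive both assertions from Proposition~\ref{prop:jacobian_realloc} applied to the noise subspace, combined with the Frobenius capacity budget~(b) and a first-order expansion of the encoder along the semantic subspace. The two parts are tightly coupled: part~(i) supplies the lower bound on noise-direction capacity that, subtracted from the total Frobenius budget, yields the upper bound on semantic-direction capacity needed for part~(ii).

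First, I would prove part~(i) by a direct application of Proposition~\ref{prop:jacobian_realloc} with $S=S_{\mathrm{noise}}$, which immediately gives $\sup_{v\in S_{\mathrm{noise}},\,\|v\|_2=1}\|J_g(x_t)\,v\|_2 \ge g_{\mathrm{req}}(t)$. The ``Frobenius mass'' claim then follows from the singular value decomposition of the restriction $J_g P_{S_{\mathrm{noise}}}$: if $\sigma_1\ge\sigma_2\ge\cdots$ denote its singular values, then $\sigma_1 = \sup_{v\in S_{\mathrm{noise}},\|v\|=1}\|J_g v\|_2$ and $\|J_g P_{S_{\mathrm{noise}}}\|_F^2 = \sum_i \sigma_i^2 \ge \sigma_1^2 \ge g_{\mathrm{req}}(t)^2$.

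Next, for part~(ii), I would use the orthogonal decomposition $\mathbb{R}^d=S_{\mathrm{sem}}\oplus S_{\mathrm{noise}}$ and the Pythagorean identity $\|J_g\|_F^2 = \|J_g P_{S_{\mathrm{sem}}}\|_F^2 + \|J_g P_{S_{\mathrm{noise}}}\|_F^2$. Combining this with assumption~(b) and Step~1 gives $\|J_g P_{S_{\mathrm{sem}}}\|_F^2 \le B^2 - g_{\mathrm{req}}(t)^2$, and since the operator norm is dominated by the Frobenius norm, $\|J_g P_{S_{\mathrm{sem}}}\|_{\mathrm{op}} \le \sqrt{B^2 - g_{\mathrm{req}}(t)^2}$. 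By assumption~(a) the class displacement $\Delta x_{c,c'}$ lies in $S_{\mathrm{sem}}$, so with matched noise realizations $x_t^{(c)}-x_t^{(c')}=\alpha_t\Delta x_{c,c'}$ also lies in $S_{\mathrm{sem}}$. A first-order Taylor (mean-value) expansion along the segment joining $x_t^{(c)}$ and $x_t^{(c')}$ then yields $\|g_\ell(x_t^{(c)})-g_\ell(x_t^{(c')})\|_2 \le \|J_g P_{S_{\mathrm{sem}}}\|_{\mathrm{op}}\,\|\Delta x_{c,c'}\|_2$, and taking the minimum over pairs together with $\|\Delta x_{c,c'}\|_2 \le \delta_{\max}$ gives \eqref{Q_ub}.

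The main obstacle is the passage from a pointwise Jacobian estimate to a finite-displacement inequality: the operator bound established in Step~2 holds at the single base point $x_t$, whereas the mean-value argument in Step~3 needs the same bound uniformly along the segment connecting $x_t^{(c)}$ and $x_t^{(c')}$. The cleanest remedy is to strengthen~(b) to a uniform-in-$x$ Frobenius cap on the relevant input region, or alternatively to interpret the displacements infinitesimally in the same local sense used to state Proposition~\ref{prop:jacobian_realloc}. A related subtlety is the noise realization across classes: the identity $x_t^{(c)}-x_t^{(c')}=\alpha_t\Delta x_{c,c'}$ requires matched $\varepsilon$; otherwise a residual $\beta_t(\varepsilon^{(c)}-\varepsilon^{(c')})\in S_{\mathrm{noise}}$ appears, which would have to be averaged out or absorbed into the noise-direction budget already controlled in Step~1.
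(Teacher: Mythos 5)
Your proposal matches the paper's proof essentially step for step: part (i) is the same direct application of Proposition~\ref{prop:jacobian_realloc} with $S=S_{\mathrm{noise}}$ plus the singular-value/Frobenius-mass argument, and part (ii) uses the same Pythagorean split of the Frobenius budget $\|J_g\|_F^2=\|J_g|_{S_{\mathrm{sem}}}\|_F^2+\|J_g|_{S_{\mathrm{noise}}}\|_F^2$ followed by bounding the encoded class displacement by $\|J_g|_{S_{\mathrm{sem}}}\|_F\,\delta_{\max}$. The subtleties you flag (a uniform Jacobian cap along the segment for the finite-displacement mean-value step, and matched noise realizations so that $x_t^{(c)}-x_t^{(c')}$ stays in $S_{\mathrm{sem}}$) are passed over silently in the paper's own proof, so your treatment is, if anything, slightly more careful.
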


\begin{proof}
We prove (i)–(ii) in order.

\noindent\textbf{(i) Necessary encoder gain allocation.}  
This claim is a direct application of Proposition \ref{prop:jacobian_realloc} (the precise necessary reallocation bound). For completeness we restate the elementary argument: for any unit \(v\in S_{\mathrm{noise}}\),
\begin{equation}
\|P_{S_{\mathrm{noise}}}M_t v\|_2 \le \|P_{S_{\mathrm{noise}}}J_u(h)J_g(x_t)v\|_2 + \|P_{S_{\mathrm{noise}}}(M_t-\widehat M_t)v\|_2
\le L_u \|J_g(x_t)v\|_2 + r_t.
\end{equation}
Rearranging yields \(\|J_g(x_t)v\|_2 \ge (\|P_{S_{\mathrm{noise}}}M_t v\|_2 - r_t)/L_u\). Taking the supremum over unit \(v\in S_{\mathrm{noise}}\) gives (i). In particular the supremum lower bound implies the Frobenius squared mass allocated to \(S_{\mathrm{noise}}\) satisfies
\begin{equation}
\sum_{i=1}^{\dim S_{\mathrm{noise}}} \sigma_i^2\big(J_g|_{S_{\mathrm{noise}}}\big) \;\ge\; \big( \sup_{\|v\|=1, v\in S_{\mathrm{noise}}}\|J_g v\| \big)^2 \;\ge\; g_{\mathrm{req}}(t)^2,
\end{equation}
where \(\{\sigma_i\}\) are singular values on that subspace. Thus at least \(g_{\mathrm{req}}(t)^2\) amount of Frobenius mass is devoted to noise directions.

\smallskip

\noindent\textbf{(ii) Representation degradation bound.}  
By hypothesis the total Frobenius norm of \(J_g(x_t)\) is at most \(B\). The Frobenius norm decomposes across orthogonal subspaces as
\begin{equation}
\|J_g\|_F^2 \;=\; \|J_g|_{S_{\mathrm{sem}}}\|_F^2 \;+\; \|J_g|_{S_{\mathrm{noise}}}\|_F^2.
\end{equation}
From (i) we have \(\|J_g|_{S_{\mathrm{noise}}}\|_F^2 \ge g_{\mathrm{req}}(t)^2\). Hence
\begin{equation}
\|J_g|_{S_{\mathrm{sem}}}\|_F^2 \le B^2 - g_{\mathrm{req}}(t)^2.
\end{equation}
For every class pair
\(\|\Delta x\|_2\le \delta_{\max}\) we have
\begin{equation}
\|g_\ell(x_t^{(c)})-g_\ell(x_t^{(c')})\|_2
\le \|J_g|_{S_{\mathrm{sem}}}\|_F \,\|\Delta x\|_2 \;\le\; B_{\mathrm{sem}}\,\delta_{\max}.
\end{equation}
hence
\begin{equation}
\label{eq:Q_upper}
Q(t)\;\le\; \sqrt{B^2 - g_{\mathrm{req}}(t)^2}\;\,\delta_{\max}.
\end{equation}
\end{proof}
\paragraph{On the definition of $S_{\mathrm{sem}}$ and $S_{\mathrm{noise}}$.}
In Proposition~5 we adopt a decomposition of the input space into a semantic subspace $S_{\mathrm{sem}}$ and a noise subspace $S_{\mathrm{noise}}$. 
This separation is primarily conceptual: $S_{\mathrm{sem}}$ corresponds to low-dimensional directions that capture meaningful variations in the data manifold (e.g., class distinctions or high-variance principal components), while $S_{\mathrm{noise}}$ contains directions dominated by Gaussian perturbations. 
In supervised settings, $S_{\mathrm{sem}}$ can be instantiated as the span of between-class differences, whereas in unlabeled settings it may be approximated by the principal components or manifold tangents of the raw data distribution, or by features from a self-supervised encoder. 
Crucially, our theoretical conclusions do not depend on a specific construction: any low-dimensional $S_{\mathrm{sem}}$ embedded in the ambient space implies that as $t\to 0$, the Jacobian mass allocated to $S_{\mathrm{noise}}$ grows disproportionately, explaining the observed degradation in representation quality.

\paragraph{Discussion and consequences.}
\begin{itemize}
  \item The bound \eqref{eq:Q_upper} is explicit and interpretable: as the target noise demand \(g_{\mathrm{req}}(t)\) increases (for example because \(\|P_{S_{\mathrm{noise}}}M_t\|_{\mathrm{op}}\) grows like \(\beta_t'/\beta_t\)), the numerator \(B^2-g_{\mathrm{req}}(t)^2\) decreases, and thus reducing \(Q(t)\). This formalizes the empirical phenomenon of a peak followed by degradation in representation quality as \(t\) becomes small.
  \item When \(g_{\mathrm{req}}(t)^2\ge B^2\) the upper bound becomes zero: under the stated model and within the linearized regime the encoder cannot simultaneously meet the target sensitivity on noise directions and preserve any nonzero minimal gain on semantic directions. Practically this corresponds to catastrophic representation collapse for arbitrarily small \(t\) unless one increases capacity \(B\), reduces head bound \(L_u\) (so \(g_{\mathrm{req}}\) decreases), increases regularization residual \(r_t\) (worse fit), or changes the schedule.
\end{itemize}

\subsection{Algorithm}
\label{alg}
As illustrated in Algorithm~\ref{alg:lcf}, the Local Contrastive Flow protocol integrates both loss components within a single forward pass. 
For $t \geq T_{\min}$, the model is trained with the standard flow-matching objective, while for $t < T_{\min}$, features are anchored to their representations at $T_{\min}$ and contrasted against other samples in the batch. 
This unified training loop ensures stable optimization and consistent representation quality across all noise levels.

\begin{algorithm}[h]
\caption{Local Contrastive Flow (LCF) Training}
\label{alg:lcf}
\begin{algorithmic}[1]
\REQUIRE Training data $\{x_0^{(i)}\}$, noise schedule $(\alpha_t,\beta_t)$, threshold $T_{\min}$, temperature $\tau$, weight $\lambda$, model $v_\theta$, feature extractor $h_\ell$, batch size $B$.
\WHILE{not converged}
    \STATE Sample minibatch $\{x_0^{(i)}\}_{i=1}^B$ and times $\{t_i\}_{i=1}^B \sim \mathrm{Uniform}(0,1)$.
    \STATE For each $i$, construct noisy input 
    \[
    x_{t_i}^{(i)} = \alpha_{t_i} x_0^{(i)} + \beta_{t_i}\varepsilon_i, 
    \quad \varepsilon_i \sim \mathcal{N}(0,I).
    \]
    \STATE Split indices: $\mathcal{I}_{\mathrm{FM}} = \{i : t_i \ge T_{\min}\}$, $\mathcal{I}_{\mathrm{LCF}} = \{i : t_i < T_{\min}\}$.
    \STATE Perform \textbf{one forward pass} to obtain 
    \[
    v_\theta(x_{t_i}^{(i)},t_i), \quad z^{(i)} = h_\ell(x_{t_i}^{(i)}), \quad 
    a_i = h_\ell(x_{T_{\min}}^{(i)}).
    (i \in \mathcal{I}_{\mathrm{LCF}})
    \]
    \STATE \textbf{Flow-matching loss:}
    \[
    \mathcal{L}_{\mathrm{FM}} = \frac{1}{|\mathcal{I}_{\mathrm{FM}}|}\!\!\sum_{i \in \mathcal{I}_{\mathrm{FM}}}\!\!
    \bigl\|v_\theta(x_{t_i}^{(i)},t_i) - v^\star(x_{t_i}^{(i)},t_i)\bigr\|_2^2.
    \]
    \STATE \textbf{Contrastive loss:} for each $i \in \mathcal{I}_{\mathrm{LCF}}$,
    \[
    \ell_i = -\log \frac{\exp\bigl(-\|z^{(i)}-a_i\|_2^2 / \tau\bigr)}{\sum_{j=1}^B \exp\bigl(-\|z^{(i)}-z^{(j)}\|_2^2 / \tau\bigr)},
    \]
    \[
    \mathcal{L}_{\mathrm{contrast}} = \frac{1}{|\mathcal{I}_{\mathrm{LCF}}|}\sum_{i \in \mathcal{I}_{\mathrm{LCF}}}\ell_i.
    \]
    \STATE \textbf{Total loss:}
    \[
    \mathcal{L} = \mathcal{L}_{\mathrm{FM}} + \lambda \,\mathcal{L}_{\mathrm{contrast}}.
    \]
    \STATE Update $\theta \leftarrow \theta - \eta \nabla_\theta \mathcal{L}$.
\ENDWHILE
\end{algorithmic}
\end{algorithm}

\subsection{More Experimental Details}
\label{exp_detail}
This section provides detailed configurations and training parameters to ensure reproducibility. 
We report settings for DiT backbones on CIFAR-10 and Tiny-ImageNet, generative training schedules, and linear probing procedures. All training experiments were conducted on a cluster of 4 NVIDIA RTX 3090 GPUs using Distributed Data Parallel (DDP). 
We provide full hyperparameter and architectural details below to facilitate reproducibility.

\subsubsection{DiT Configurations}
We adopt DiT backbones of varying scales depending on dataset size. 
Table~\ref{tab:dit_config} summarizes the architectural configurations.

\begin{table}[h]
\centering
\caption{DiT configurations for CIFAR-10 and Tiny-ImageNet experiments.}
\label{tab:dit_config}
\begin{tabular}{lcccccc}
\toprule
Dataset & Depth & Hidden Dim & Heads & MLP Dim & Patch Size & Params (M) \\
\midrule
CIFAR-10       & 12  & 384 & 6  & 1536 & $2\times 2$  & 21.3 \\
Tiny-ImageNet  & 12  & 768 & 12  & 3072 & $2\times 2$  & 85 \\
\bottomrule
\end{tabular}
\end{table}

\subsubsection{Generative Training Parameters}
Generative training follows the flow-matching or Local Contrastive Flow objectives described in Section~\ref{sec:method}. 
We use AdamW with weight decay (0.01), gradient clipping (max norm$=1$), EMA (0.9999). 
Key hyperparameters are given in Table~\ref{tab:gen_training}.

\begin{table}[h]
\centering
\caption{Generative training parameters.}
\label{tab:gen_training}
\begin{tabular}{lccccccc}
\toprule
Dataset & Feature Layer & Batch Size & Base LR & Training epoch & warmup epoch \\
\midrule
CIFAR-10  & 8 & 256 & $1.0\times10^{-4}$ & 1200  & 13 \\
Tiny-ImageNet & 8 & 32 & $1.0\times10^{-4}$ & 1200 & 13 \\
\bottomrule
\end{tabular}
\end{table}

\subsubsection{Linear Probing Protocol}
To evaluate representation quality, we freeze the pretrained DiT encoders and train a linear classifier on the output of the $8_{th}$ layer. 
The hyperparameters listed in Table~\ref{tab:linear_probing}. 

\begin{table}[h]
\centering
\caption{Linear probing training parameters.}
\label{tab:linear_probing}
\begin{tabular}{lcccccc}
\toprule
Dataset & Optimizer & Batch Size & LR & Weight Decay & Epochs \\
\midrule
CIFAR-10       & AdamW & 128 & 0.001 & 0.01 & 15 \\
Tiny-ImageNet  & AdamW & 128 & 0.001 & 0.01 & 15 \\
\bottomrule
\end{tabular}
\end{table}

\begin{figure}[h]
\centering
\includegraphics[width=0.9\linewidth]{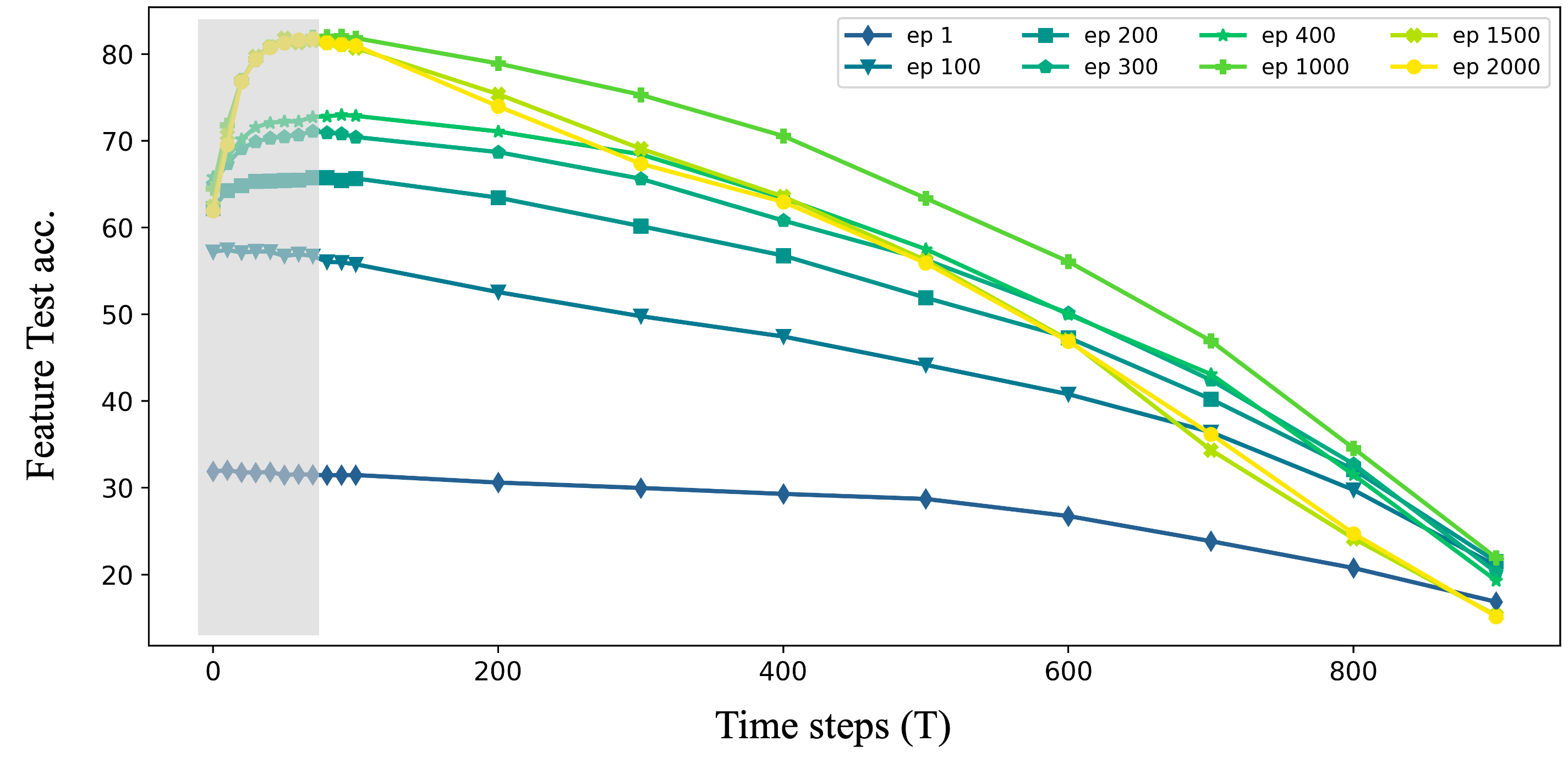}
\caption{Effect of training epochs on representation quality across noise levels. 
Feature degradation does not appear in the early stages of training (e.g., up to 200 epochs) 
but emerges as training progresses further, with late-stage models showing anomalous degradation in the low-noise regime.}
\label{fig:append_ep_acc}
\end{figure}

\subsection{Finite Samples/Optimization Considerations}
\begin{figure}[h]
\centering
\includegraphics[width=0.9\linewidth]{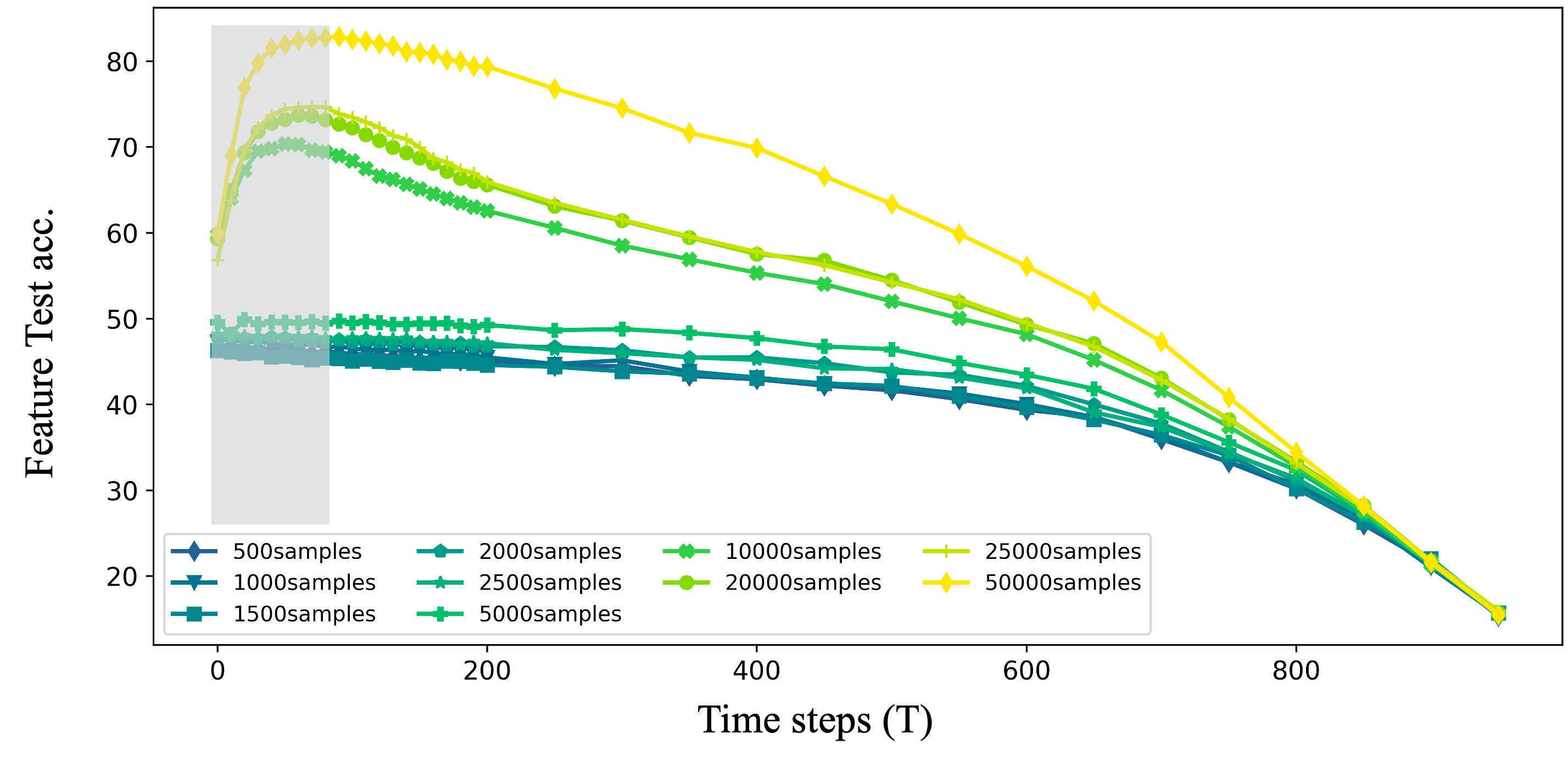}
\caption{Effect of training set size on representation quality across noise levels. 
When trained with small datasets (e.g., fewer than 5{,}000 samples), 
representation quality decreases monotonically with noise $t$, and no degradation peak appears. 
Feature degradation emerges only when sufficient training samples are available, 
highlighting its dependence on data scale.}
\label{fig:append_samples_acc}
\end{figure}

We provide two complementary experiments to further clarify when representation degradation emerges. 
Figure~\ref{fig:append_ep_acc} shows that degradation does not occur in the early stages of training: 
for fewer than 200 epochs, representation quality decreases monotonically with $t$, 
and the anomalous peak only appears as training continues and the model begins to overfit low-noise regions. 
Figure~\ref{fig:append_samples_acc} shows that degradation also depends on dataset size: 
with very limited training data (fewer than $5{,}000$ samples), features again degrade monotonically, 
whereas larger datasets provide sufficient capacity for the model to develop non-monotone behavior. 
Together, these results confirm that representation degradation is not an inevitable property of flow matching itself, 
but emerges from the interaction between training dynamics, sample size, and the ill-conditioning of the low-noise regime.

\paragraph{Mechanism.} 
The absence of degradation in early training (Figure~\ref{fig:append_ep_acc}) reflects that optimization initially prioritizes stable moderate-noise regions, leaving small-noise supervision underfit. 
Only in later epochs, once the optimizer begins reducing errors in ill-conditioned regions, does the Jacobian capacity of the network reallocate toward unstable directions, producing the non-monotone collapse. 
Similarly, with limited training data (Figure~\ref{fig:append_samples_acc}), the model cannot memorize fine-scale noise patterns in the low-noise regime. 
This lack of capacity prevents over-specialization, so representations degrade monotonically with $t$ but without exhibiting a collapse peak. 
In contrast, larger datasets provide sufficient samples for the model to interpolate unstable regions, enabling the emergence of representation degradation. 
Together, these results indicate that degradation arises not from flow matching alone, but from the interaction of optimization dynamics, data scale, and the ill-conditioning inherent in the low-noise regime.

\begin{figure}[h]
\centering
\includegraphics[width=\linewidth]{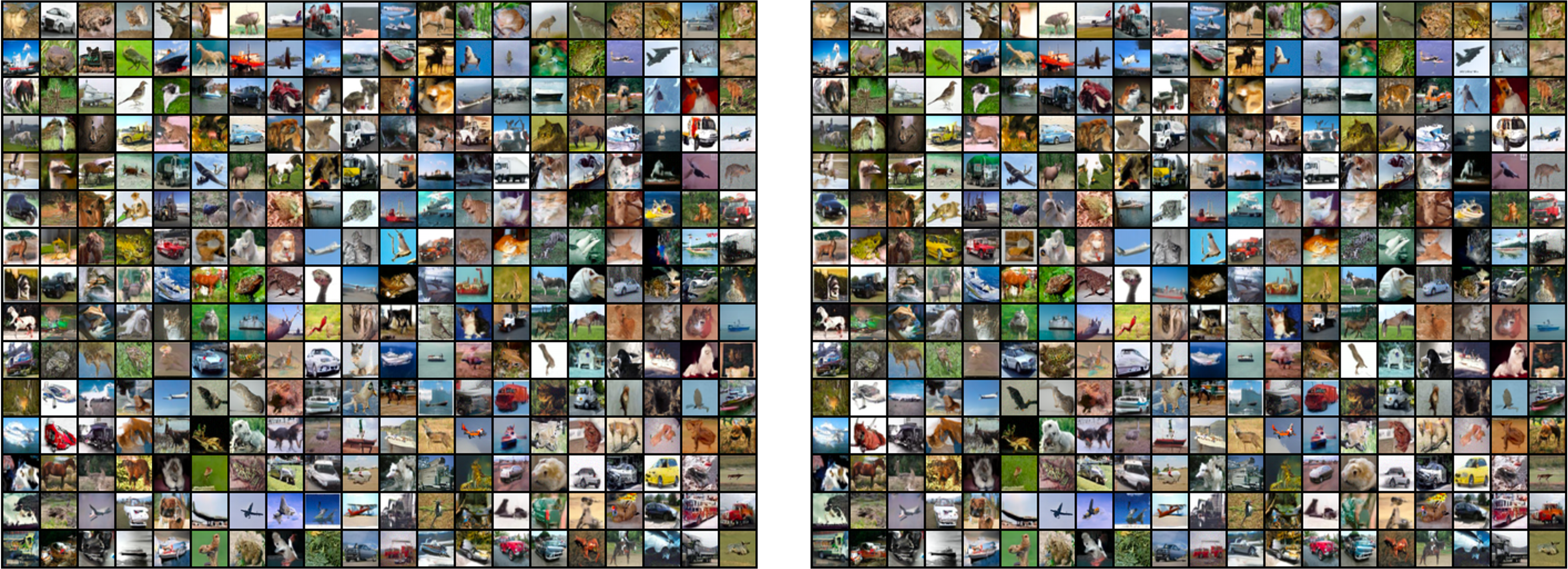}
\caption{Additional generation results for baselines and LCF on the CIFAR-10 dataset. \textit{Left}: baseline. \textit{Right}: LCF}
\label{fig:cifar}
\end{figure}

\begin{figure}[h]
\centering
\includegraphics[width=\linewidth]{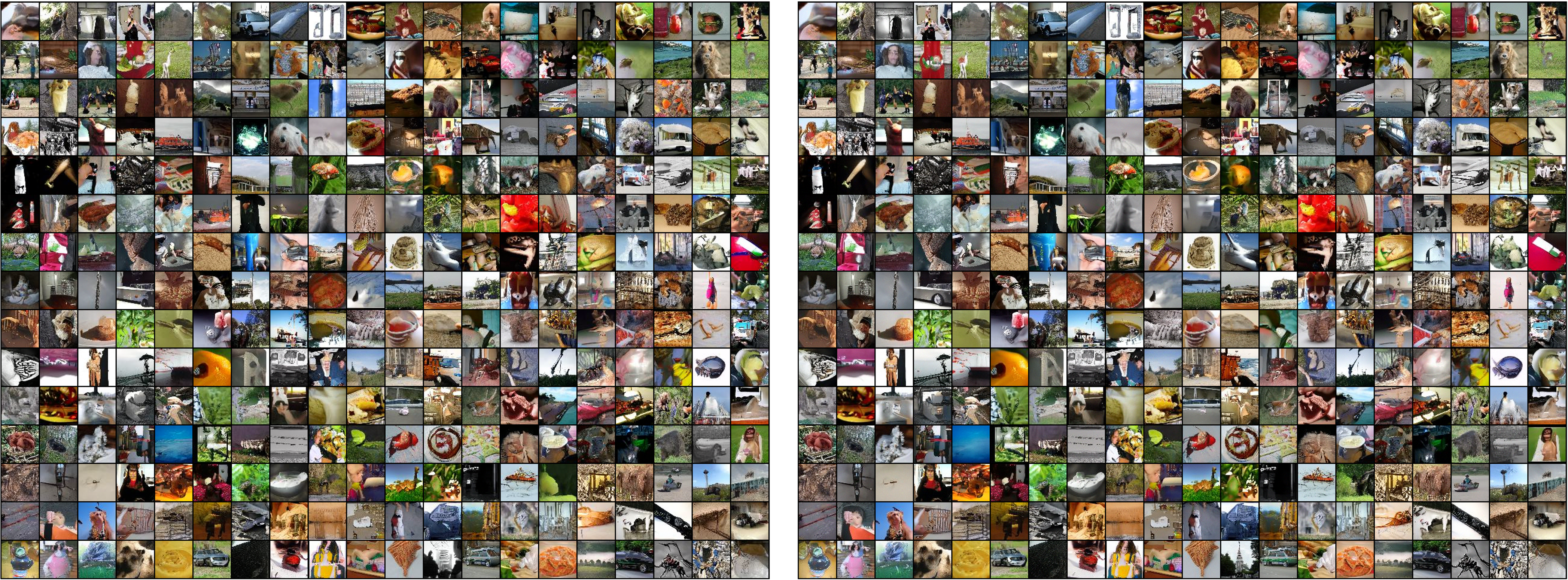}
\caption{Additional generation results for baselines and LCF on the Tiny-ImageNet dataset. \textit{Left}: baseline. \textit{Right}: LCF}
\label{fig:tiny}
\end{figure}

\end{document}